%% file: main.tex
\documentclass{defaltmlr}
\input{math_commands.tex}

\usepackage{url}
\usepackage[table]{xcolor}
\usepackage[dvipsnames]{xcolor}
\usepackage{graphicx}
\usepackage{geometry}
\usepackage{booktabs}
\usepackage[T1]{fontenc}
\usepackage{fontawesome}
\usepackage{fancyhdr}
\usepackage{xspace}
\usepackage{enumitem}
\usepackage{etoc}
\usepackage{titletoc}
\usepackage{tcolorbox}
\usepackage{listings}

\usepackage{algorithm}
\usepackage{algpseudocode}

\usepackage{color}
\usepackage[dvipsnames]{xcolor}
\definecolor{JalapenoRed}{RGB}{183,21,64}
\definecolor{Belize}{RGB}{41,128,185}
\definecolor{Amour}{RGB}{238,82,83}
\usepackage{colortbl}

\usepackage{hyperref}

\usepackage{amsmath}
\usepackage{amssymb}
\usepackage{amsthm}
\usepackage{tocloft}

\usepackage[most]{tcolorbox}
\usepackage{enumitem}

\tcbset{
  promptbox/.style={
    title={#1},
    width=\columnwidth,
    breakable,
    enhanced,
    colback=gray!3,
    colframe=black!30,
    coltitle=black,
    fonttitle=\bfseries\footnotesize,
    boxrule=0.8pt,
    arc=1mm,
    boxsep=2pt,
    left=4pt,
    right=4pt,
    top=4pt,
    bottom=4pt,
    before skip=4pt,
    after skip=6pt
  }
}

\newcolumntype{L}[1]{>{\raggedright\arraybackslash}p{#1}}

\usepackage{titlesec}
\usepackage{titletoc}
\usepackage[most]{tcolorbox}
\usepackage{enumitem}

\usepackage[table]{xcolor}

\definecolor{GroundBG}{HTML}{6e8eb7}
\definecolor{ProfileBG}{HTML}{d79a77}
\definecolor{InteractBG}{HTML}{d77434}
\definecolor{OursBG}{HTML}{3e9b4e}

\usepackage[T1]{fontenc}
\usepackage[utf8]{inputenc}
\usepackage{microtype}
\usepackage{inconsolata}
\usepackage{graphicx}

\newcommand{\ie}{\textit{i}.\textit{e}.\xspace}

\usepackage[misc]{ifsym}
\usepackage{wrapfig}

\hypersetup{linkcolor=cyan}
\usepackage{mathtools}
\usepackage{ulem}
\usepackage{pifont}
\usepackage{bbding}

\usepackage{tcolorbox}
\usepackage{makecell}

\makeatletter
\let\orig@fnsymbol\@fnsymbol
\def\@fnsymbol#1{\ifcase#1\or\relax\else\orig@fnsymbol{#1}\fi}
\makeatother

\def\method{\texttt{Flash-OPD}}

\newtheorem{theorem}{Theorem}
\newtheorem{lemma}[theorem]{Lemma}
\newtheorem{proposition}[theorem]{Proposition}
\newtheorem{corollary}[theorem]{Corollary}

\newtheorem{remark}{Remark}

\usepackage{dsfont}

\usepackage{wrapfig}

\hypersetup{linkcolor=cyan}
\usepackage{mathtools}
\usepackage{ulem}
\usepackage{pifont}
\usepackage{bbding}

\usepackage{tcolorbox}
\usepackage{makecell}

\usepackage{pifont}
\newcommand{\cmark}{\ding{51}}
\newcommand{\xmark}{\ding{55}}

\title{Flash-OPD: Fast On-Policy Distillation}

\author{
\parbox{\textwidth}{
Wei Chen\textsuperscript{1,2,3}, \hspace{0.5mm}
Junle Chen\textsuperscript{3}, \hspace{0.5mm}
\textbf{Yitong Yang}\textsuperscript{1}, \hspace{0.5mm}
\textbf{Zhaoyang Xu}\textsuperscript{1}, \hspace{0.5mm}
\textbf{Jiaxin Lin}\textsuperscript{1}, \hspace{0.5mm}\\
\textbf{Yuxuan Liang}\textsuperscript{2}, \hspace{0.5mm}
\textbf{Xiaofang Zhou}\textsuperscript{3}, \hspace{0.5mm}
\textbf{Kai Wang}\textsuperscript{1}, \hspace{0.5mm}
\textbf{Rui Chen}\textsuperscript{1}
}
}

\affiliation{\textsuperscript{1}Tencent Hy, \hspace{0.5mm} \textsuperscript{2}HKUST(GZ), \hspace{0.5mm} \textsuperscript{3}HKUST
}

\abstract{
On-policy distillation (OPD) provides dense teacher supervision on student-generated trajectories, but generating and evaluating long rollouts incurs substantial training cost. Existing acceleration methods reduce this cost through open-loop rollout schedules or closed-loop horizon adaptation. However, supervision compatibility can vary substantially across trajectories, making a single rollout horizon difficult to match their heterogeneous reliable lengths: an overly short horizon may truncate useful supervision, while an overly long one wastes computation beyond reliable regions. Our key insight is that the trajectory-specific reliability boundary need not be predicted before generation. By viewing reliability as the first-passage of accumulated low teacher--student compatibility events, the boundary is inherently unknown before sampling, yet whether it has been reached can be determined exactly from the observed prefix. Building on this insight, we propose \method{}, which shifts from rollout-horizon control to adaptive trajectory-level boundary verification. \method{} interleaves cached generation with teacher verification and independently stops each trajectory according to its observed compatibility events. To reduce verification overhead, the recent event rate is used only to schedule the next verification point, while the actual stopping decision always relies on the exact cumulative count. This separation prevents estimation errors from causing premature termination while enabling efficient verification during generation. Extensive experiments across diverse datasets and teacher--student settings show that \method{} achieves $2.2\times$--$7.5\times$ speedups over standard OPD while maintaining or improving accuracy.
}

\correspondence{\textcolor{myblue}{onedeanxxx@gmail.com}}
\website{\url{https://onedean.github.io/Flash-OPD-Site}}
\repository{\url{https://github.com/Onedean/Flash-OPD}}
\date{September 24, 2026}

\begin{document}

\maketitle

\makeatletter
\let\@fnsymbol\orig@fnsymbol
\makeatother

\input{contents/1_introduction}

\input{contents/2_RelatedWork}

\input{contents/3_Preliminaries}

\input{contents/4_Methodology}

\input{contents/5_Experiments}

\input{contents/6_Conclusion}


\definecolor{textgray}{HTML}{6E6E73}
\makeatletter
\newcommand\applefootnote[1]{%
  \begingroup
  \renewcommand\thefootnote{}%
  \renewcommand\@makefntext[1]{\noindent##1}%
  \footnote{#1}%
  \addtocounter{footnote}{-1}%
  \endgroup
}
\makeatother

\bibliography{ref}
\bibliographystyle{plainnat}
\newpage
\appendix

\input{appendix/0_appendix_list}

\end{document}

%% file: math_commands.tex
\usepackage{amsmath,amsfonts,bm}

\def\eqref#1{equation~\ref{#1}}

\def\1{\bm{1}}

\DeclareMathAlphabet{\mathsfit}{\encodingdefault}{\sfdefault}{m}{sl}
\SetMathAlphabet{\mathsfit}{bold}{\encodingdefault}{\sfdefault}{bx}{n}



%% file: contents/1_introduction.tex
\section{Introduction}
\label{sec:intro}

On-policy distillation (OPD) has become an important component of frontier language model post-training, enabling both capability transfer and the consolidation of specialized teachers~\citep{yang2025qwen3,xu2026deepseek,zeng2026glm5}. By providing dense teacher feedback on student-generated prefixes, OPD mitigates the training--inference mismatch of learning from fixed teacher-generated responses~\citep{ross2011reduction,kim2016sequence,gu2024minillm}. Practical token-level formulations make this feedback locally available without accumulating future distillation signals into each update~\citep{agarwal2024policy,lu2025onpolicy}. However, local supervision does not automatically yield efficient long-horizon training: generating and evaluating long trajectories remains costly, while teacher guidance may become less useful as student prefixes drift into regions of low teacher--student compatibility~\citep{li2026rethinking,wang2026demystifying}.

Recent work addresses this bottleneck by controlling rollout length through predefined schedules~\citep{zhang2026full,zhang2026fast} or feedback-driven adaptation ~\citep{yang2026prune,liang2026adwin}. Yet these methods select a shared horizon for each batch, despite substantial variation in supervision compatibility across trajectories. Figure~\ref{fig.intro} illustrates this granularity mismatch: a shared horizon can extend well beyond the compatibility-based boundaries of some samples, even as it adapts over training. Shortening that horizon uniformly may instead curtail useful supervision for other samples. Sample-level prefix filtering offers finer selection, but produces a binary keep-or-drop decision rather than adapting the continuation length of each retained trajectory~\citep{zhao2026prefix}. These limitations motivate allocating rollout computation at the trajectory level.

Our key insight is that a trajectory's stopping boundary cannot be known exactly before sampling, nor does it need to be: it can instead be self-verified during generation. We use teacher--student top-$k$ overlap as a proxy for local compatibility and define the boundary as the first position where the cumulative number of low-compatibility events reaches a prescribed budget. This accumulation tolerates isolated disagreements while adapting the boundary to each realized trajectory. Although its location is unknown in advance, whether the boundary has been crossed can be determined exactly from the observed prefix. Moreover, with model parameters fixed during rollout collection, extending a prefix leaves its existing token-level distillation signals unchanged. This enables interleaved generation and teacher evaluation, turning shared horizon into per-trajectory boundary verification.

Building on this insight, we propose \method{}, which adaptively schedules verification and independently stops each trajectory within a reliable rollout horizon. At each verification point, \method{} evaluates newly generated positions and checks the cumulative event count. If the budget remains unexhausted, it uses the recent event rate to schedule the next check; otherwise, it stops. Crucially, the estimated event rate determines \textit{when to verify}, whereas the observed count determines \textit{whether to stop}. For boundaries reached before EOS or the length cap, adaptive verification cannot
trigger termination before the defined boundary. Resuming from cached prefixes ensures that each token is decoded only once, avoiding redundant generation. Through verified stopping at the trajectory level, \method{} reduces rollout computation and substantially improves training efficiency over vanilla OPD while maintaining strong performance. Our contributions are threefold:

\begin{figure*}[t]
    \centering
    \includegraphics[width=1.0\textwidth]{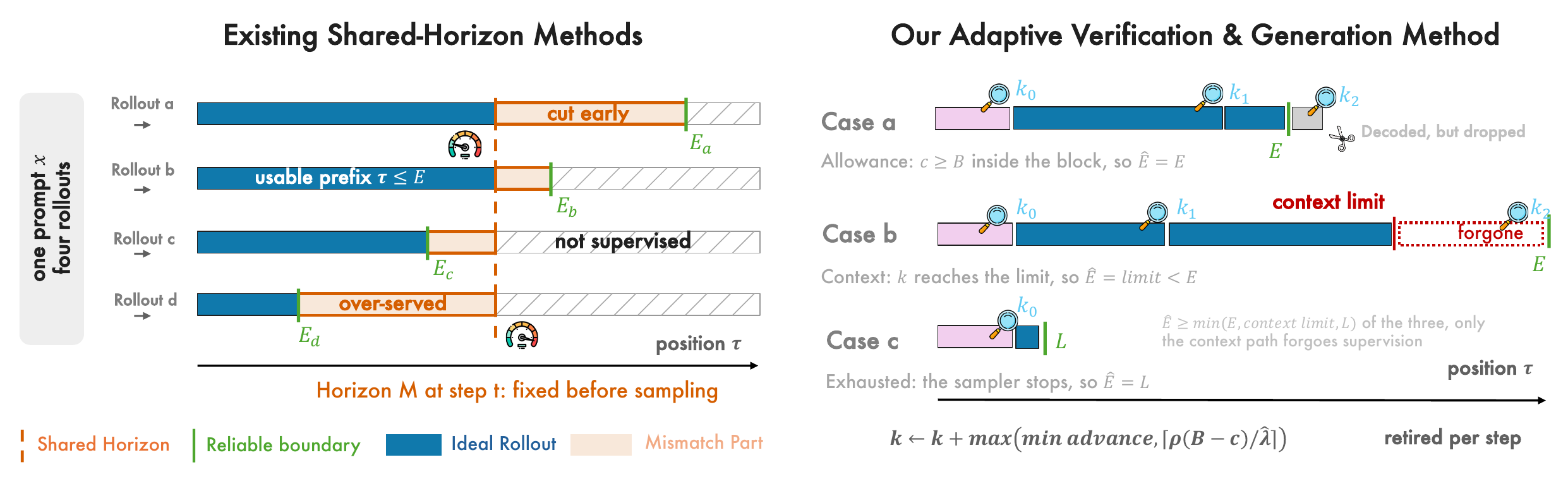}
    \vspace{-5mm}
    \caption{From a shared rollout horizon to adaptive verification. Existing OPD acceleration methods commit to a shared horizon before trajectory sampling, causing premature truncation or redundant generation when usable lengths vary across trajectories; \method{} instead verifies reliability during generation and retires each trajectory at its own boundary, ensuring reliability and efficiency.}
    \label{fig.intro}
    \vspace{-3mm}
\end{figure*}

\begin{itemize}[leftmargin=2.5mm,parsep=3pt]
    \item We identify a granularity mismatch between batch-level rollout horizons and trajectory-specific supervision compatibility, motivating per-sample stopping for long-horizon OPD acceleration.
    \item We propose \method{}, which interleaves cached generation with adaptive verification to determine each trajectory's rollout length from its observed compatibility events.
    \item Extensive experimental results across diverse datasets and teacher--student settings demonstrate that \method{} achieves strong effectiveness, efficiency, robustness, and scalability.
\end{itemize}

%% file: contents/2_RelatedWork.tex
\section{Related Work}

\paragraph{On-Policy Distillation.}
OPD alleviates distribution mismatch by evaluating a frozen teacher on trajectories sampled from the current student. MiniLLM~\citep{gu2024minillm} formalized this paradigm under reverse KL, while GKD~\citep{agarwal2024policy} unified the on-policy / off-policy spectrum across divergences and demonstrated the benefits of on-policy supervision. Subsequent token-level formulations~\citep{lu2025onpolicy} showed that OPD can achieve strong reasoning gains at substantially lower cost than outcome-reward RL, leading to its adoption in production post-training pipelines~\citep{yang2025qwen3,zeng2026glm5} and extensions to multi-teacher and self-distillation settings~\citep{shenfeld2026selfdistillation,hubotter2026reinforcement,ye2026online,zhao2026selfdistilled}. \textit{However, because supervision is obtained along the student's current trajectories, long-horizon rollout remains a major computational bottleneck, even when later tokens provide limited useful signal. This makes rollout horizon a natural target for efficiency optimization.}

\paragraph{Rollout-Horizon Control.}
A growing body of work therefore treats rollout length as an explicit design variable. Open-loop methods determine the horizon from the training schedule: POPD progressively expands it, TOPD permanently truncates it to a fixed fraction~\citep{zhang2026full}, and prefix distillation trains on short prefixes with a linear ramp~\citep{zhang2026fast}. Closed-loop methods instead use runtime signals: \textsc{Prune-OPD} derives a drift signal from token-level top-$k$ overlap to guide dynamic truncation~\citep{yang2026prune}, while \textsc{ADWIN} selects the shortest window whose prefix gradient remains aligned with the full-rollout update~\citep{liang2026adwin}. Despite their different criteria, these methods make horizon decisions at the batch level, forcing all sequences to share the same rollout length. We argue that this granularity is mismatched with the inherently sample-dependent reliability of teacher supervision. PG-OPD~\citep{zhao2026prefix} moves to the sample level by probing fixed prefixes and retaining only high-scoring samples, but yields a binary keep-or-drop decision. \textit{In contrast, we infer a continuous per-sample rollout budget within the existing batch-level horizon, enabling each trajectory to terminate according to its own supervision reliability.}

Table ~\ref{tab:comparison} compares the structural differences between our and other rollout control acceleration methods. Furthermore, we provide a more detailed introduction to additional related work in Appendix~\ref{appendix_related}.

\input{tables/compare}

%% file: tables/compare.tex
\begin{table}[t!]
\centering
\small
\setlength{\tabcolsep}{5pt}
\renewcommand{\arraystretch}{1.2}
\resizebox{1.0\linewidth}{!}{
\begin{tabular}{@{}llllcc@{}}
\toprule
& \multicolumn{2}{c}{\textbf{Length decision}} & \multicolumn{3}{c}{\textbf{Allocation}} \\
\cmidrule(lr){2-3}\cmidrule(lr){4-6}
\textbf{Method} & Driven by & Granularity & Spectrum & \makecell{All seqs.\\trained} & \makecell{Stacks on a\\batch ctrl.} \\
\midrule
\addlinespace[2pt]
POPD~\citep{zhang2026full}            & training step       & batch     & single       & \cmark & \xmark \\
TOPD~\citep{zhang2026full}            & training step       & batch     & single       & \cmark & \xmark \\
Prefix OPD~\citep{zhang2026fast}      & training step       & batch     & single       & \cmark & \xmark \\
\addlinespace[2pt]
\textsc{ADWIN}~\citep{liang2026adwin} & gradient cosine     & batch     & single       & \cmark & \xmark \\
\textsc{Prune-OPD}~\citep{yang2026prune} & top-$k$ overlap  & batch     & single       & \cmark & \xmark \\
\addlinespace[2pt]
PG-OPD~\citep{zhao2026prefix}         & probe overlap       & candidate & two-valued   & \xmark & \cmark \\
\rowcolor{gray!12}
\textbf{\method~~(ours)}                   & usable horizon      & sequence  & continuous   & \cmark & \cmark \\
\bottomrule
\end{tabular}
}
\caption{
{Comparison of rollout-control methods for OPD.}
\textit{Driven by}: what the length decision consumes.
\textit{Granularity}: the unit the decision is made per.
\textit{Spectrum}: the set of realized lengths within one step.
\textit{All seqs.\ trained}: whether every sampled sequence contributes to the loss.
\textit{Stacks on a batch ctrl.}: can run under an independent batch-level window controller instead of replacing it.
}
\label{tab:comparison}
\end{table}

%% file: contents/3_Preliminaries.tex
\section{Preliminaries}
\label{sec:prelim}

\subsection{On-Policy Distillation}

Let $x$ be an input prompt and $y = (y_1,\dots,y_L)$ a response sampled autoregressively from the student policy $\pi_\theta$. At position $\tau$ write $y_{<\tau}$ for the prefix and define
\begin{equation}
p_\tau(v) \triangleq \pi_\theta(v \mid x, y_{<\tau}),
\qquad
q_\tau(v) \triangleq \pi_T(v \mid x, y_{<\tau}),
\label{eq:dists}
\end{equation}
over the vocabulary $v \in \mathcal{V}$, with $\pi_T$ a frozen teacher model. OPD minimizes a token-level divergence on student-sampled trajectories,
\begin{equation}
\mathcal{L}_{\mathrm{OPD}}(\theta) = \mathbb{E}_{x,\,y \sim \pi_\theta(\cdot \mid x)}\Big[\textstyle\sum_{\tau=1}^{L} D_{\mathrm{KL}}\big(p_\tau \,\|\, q_\tau\big)\Big],
\label{eq:opd}
\end{equation}
approximated in practice on the student's top-$k$ candidates to give a per-position reward vector $r_{\tau,\cdot}$.

Eq.~\eqref{eq:opd} exhibits two useful properties. \textit{Locality:} $r_{\tau,\cdot}$ depends only on $(x,y_{\le\tau})$, not on later tokens. \textit{Permanence:} once the prefix is fixed, $r_{\tau,\cdot}$ remains unchanged as generation continues. Thus, a trajectory can be scored incrementally, and stopping at $\tau$ forfeits only subsequent positions without affecting the supervision already obtained. We relegate the proofs of these properties to Appendix~\ref{appendix_theory}.

\subsection{Reliability and Usable Length}

Locality and persistence allow supervision to accumulate gradually, but this does not mean that it maintains equal utility throughout the entire unfolding process. Prior empirical analysis~\citep{li2026rethinking,yang2026prune} links successful OPD to progressive alignment on shared high-probability tokens, showing that optimizing only these tokens recovers nearly all the gains of standard OPD. This finding motivates us to use top-$k$ overlap as a proxy for local teacher--student compatibility. Formally, let $K^S_\tau$ and $K^T_\tau$ denote the top-$k$ sets, and define a low-compatibility event as follow:
\begin{equation}
\phi_\tau
=
\mathds{1}\big[\,|K^S_\tau \cap K^T_\tau|/k < \eta\,\big],
\label{eq:phi}
\end{equation}
where $\eta \in (0,1)$ is the compatibility threshold. To tolerate isolated mismatches, we accumulate these events and define the reliability boundary as a count budget $B \in \mathbb{N}_{>0}$ for such events:
\begin{equation}
c_\tau = \textstyle\sum_{i=1}^{\tau}\phi_i,
\qquad
E = \min\{\tau \in \{1,\dots,L\}: c_\tau \ge B\},
\label{eq:usable}
\end{equation}
where $E$ is the \textit{usable length} of trajectory. Notably, we can naturally observe three crucial properties of $E$: \ding{182} it is \textit{trajectory-specific}; \ding{183} it is the \textit{exact first-passage time} of an integer-valued process rather than a quantity requiring estimation; \ding{184} it is \textit{posterior}, hence unknowable prior to sampling $y$.

%% file: contents/4_Methodology.tex
\section{Methodology}
\label{sec:method}

\subsection{From Horizon Control to Boundary Verification}
\label{sec:method-motivation}

\textbf{Motivation.}
Existing OPD acceleration methods reduce training cost by shortening rollouts. However, as discussed above, Property \ding{182} of reliability length, \ie, \textit{trajectory specificity}, implies that different trajectories have different reliable lengths. For methods that share a rollout horizon across a batch, whether the horizon is scheduled by training progress or adapted from historical feedback, a single scalar is unlikely to fit these heterogeneous boundaries. We therefore argue that the key lies not in adapting a shared horizon, but in providing each trajectory with an appropriate rollout length.

\begin{figure*}[htbp!]
    \centering
    \vspace{-2mm}
    \includegraphics[width=1.0\textwidth]{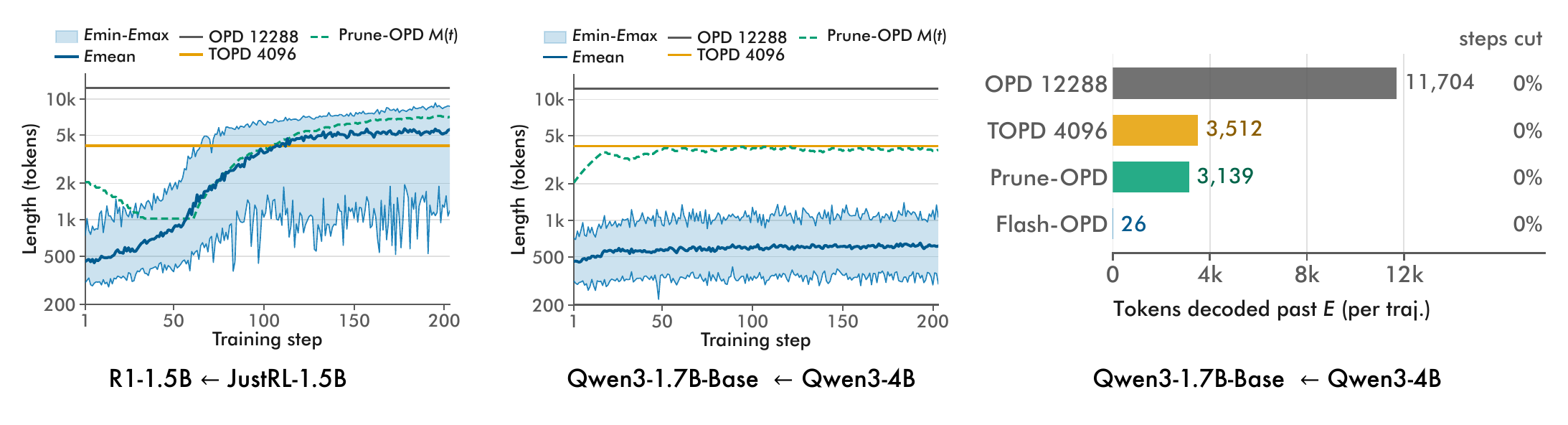}
    \vspace{-6mm}
    \caption{Heterogeneous and evolving reliability lengths expose the mismatch of fixed and adaptive batch-shared horizons (left and middle), while our method \method{} substantially reduces excess rollout budget without prematurely truncating reliable prefixes in the illustrated setting (right).}
    \label{fig.motivation}
\end{figure*}

\textbf{Empirical Study.}
To quantify this phenomenon, we analyze training dynamics on two representative student--teacher pairs, comparing the rollout horizons of existing acceleration methods against the empirical reliability lengths of individual trajectories. The left and middle panels of Figure~\ref{fig.motivation} reveal heterogeneous and dynamically evolving reliability boundaries that neither open-loop nor closed-loop shared-horizon methods can consistently track. In the left panel, TOPD's fixed horizon initially exceeds the entire reliability range but later falls below its mean, while Prune-OPD over-allocates rollout budget early in training. In the middle panel, both methods remain above $E_{\max}$, avoiding premature truncation at the cost of substantial computation beyond the reliable length. The right panel further confirms significant over-decoding beyond reliable lengths across existing horizon-control methods. 
These findings collectively reveal the inefficiency of existing methods.

\textbf{Discussion.}
The above observations suggest that the granularity mismatch between shared horizon and per-trajectory boundaries cannot be resolved solely by a more sensitive horizon controller. A natural solution is to transform budget allocation into per-trajectory stopping decisions, allowing each trajectory to terminate independently based on its own compatibility events. However, the difficulty lies in the property~\ding{184} of $E$: it can only be known after the trajectory is sampled, yet the budget must be set before sampling. Nevertheless, property~\ding{183} offers a promising perspective: $E$ is the first-passage time that depends on prefix counting. Therefore, instead of predicting the $E$ of different trajectories in advance before generation, we can verify whether $c_\tau$ has reached $B$ during trajectory unfolding, thereby naturally deriving a credible $E$. \textit{Locality} and \textit{Permanence} ensure that this does not alter the supervision already computed on the prefix. This motivates us to move the stopping decision inside the generation process, shifting from horizon control to boundary verification.

\subsection{Flash-OPD: Adaptive Boundary Verification}
\label{sec:method-ict}

Building on theses insights, we propose \method{} to robustly accelerate OPD via adaptive boundary verification. Rather than fixing unrolling steps pre-sampling, \method{} dynamically stops each trajectory as it unrolls. Alternating between \textit{generation} and \textit{verification}, rollouts extend to verification points where cumulative counts are evaluated to either halt or continue. The exact usable length determines termination, while the estimated event rate sets the next verification point. 

\begin{algorithm}[htbp]
\caption{\method{}: Fast OPD via Adaptive Verification}
\label{alg:ict}
\begin{algorithmic}[1]
\Require prompts $\mathcal{D}$, student $\pi_\theta$, teacher $\pi_T$,
allowance $B$, initial verification position $k_0$, discount $\rho$,
window $W$, minimum advance $k_{\min}$
\For{training step $t = 1,2,\dots$}
    \State sample prompts
    $\{x^{(b)}\}_{b=1}^{B_{\mathrm{sz}}} \sim \mathcal{D}$;
    set $k^{(b)} \leftarrow k_0$, $c^{(b)} \leftarrow 0$,
    mark all $b$ active
    \While{any $b$ is active}
        \State decode every active $b$ from its current position to
        $k^{(b)}$, resuming from cached prefixes
        \State evaluate $\pi_T$ on the newly produced positions;
        update $c^{(b)} \leftarrow c^{(b)}_{k^{(b)}}$
        by Eqs.~\eqref{eq:phi}--\eqref{eq:usable}
        \For{each active $b$}
            \If{$c^{(b)} \ge B$ \textbf{or} $k^{(b)} \ge L_{\max}$}
                \State $\hat{E}^{(b)} \leftarrow k^{(b)}$;
                mark $b$ inactive
                \Comment{stops at its own boundary}
            \Else
                \State $k^{(b)} \leftarrow k^{(b)}
                + \max\big(k_{\min},
                \lceil \rho (B - c^{(b)})/\hat\lambda^{(b)}
                \rceil\big)$
                \Comment{Eq.~\eqref{eq:step}}
            \EndIf
        \EndFor
    \EndWhile
    \State update $\theta$ on
    $\{y^{(b)}_{\le \hat{E}^{(b)}}\}$
    with rewards $r_{\tau,\cdot}$
\EndFor
\end{algorithmic}
\end{algorithm}

\textbf{Implementation Details.}
\label{sec:method-proc}
We outline the overall procedure in Algorithm~\ref{alg:ict}. Specifically, given an allowance $B$, an initial verification position $k_0$, a discount factor $\rho\in(0,1]$, a trailing window $W$, and a minimum advancement step $k_{\min}$. Each trajectory maintains a verification position $k\leftarrow k_0$ and an accumulated count $c\leftarrow0$, and repeats the following three steps:

\textit{(i) Generate to the verification position.}
Decode from the current position to $k$, resuming from the cached prefix rather than regenerating existing tokens.

\textit{(ii) Verify the accumulated count.}
Evaluate the teacher on the newly produced positions and update $c\leftarrow c_k$ using Eqs.~\eqref{eq:phi}--\eqref{eq:usable}. The resulting $c_k$ is the exact count of observed low-compatibility events.

\textit{(iii) Stop or advance.}
If $c\ge B$ or $k\ge L_{\max}$, set $\hat E=k$ and stop the trajectory. Otherwise, advance the verification position according to
\begin{equation}
k \;\leftarrow\; k + \max\!\Big(k_{\min},\;
\big\lceil \rho\,(B - c)/\hat\lambda_k \big\rceil\Big),
\qquad
\hat\lambda_k = \big(c_k - c_{k-W}\big)/W .
\label{eq:step}
\end{equation}
Here, \(\hat\lambda_k\) estimates the local rate of low-compatibility events over the latest \(W\) positions. With \(B-c\) events remaining before the boundary, \((B-c)/\hat\lambda_k\) estimates how many additional tokens would be needed if the current rate persisted. We advances only a fraction \(\rho\) of this estimated distance before measuring the rate again, yielding large steps far from the boundary and progressively smaller steps as the boundary approaches. The minimum step \(k_{\min}\) ensures continued progress, while the actual stopping decision always relies on the exact count \(c_k\ge B\). Figure~\ref{fig.example} provides an intuitive example.

\begin{figure*}[t!]
    \centering
    \includegraphics[width=1.0\textwidth]{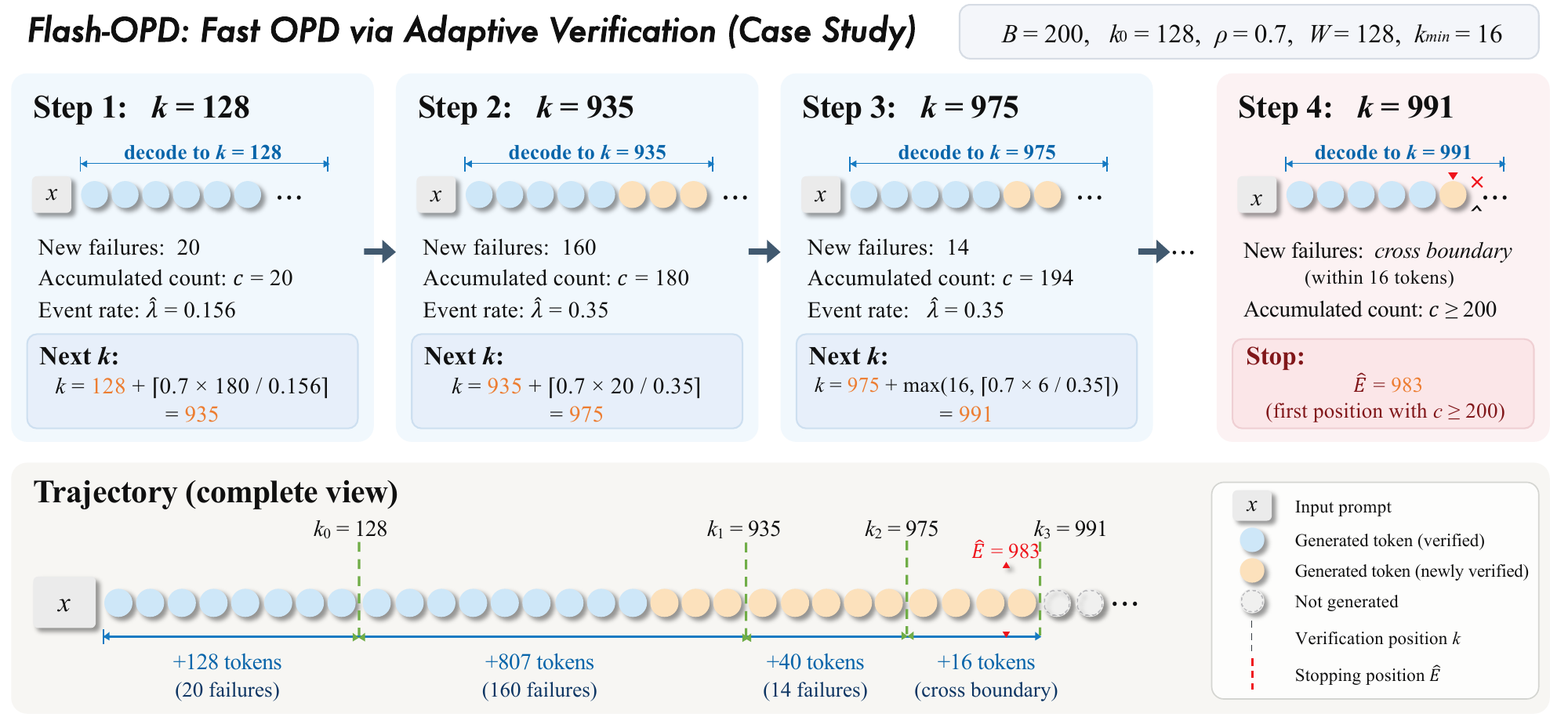}
    \vspace{-4mm}
    \caption{A schematic diagram of our rollout case, which adaptively adjusts verification positions according to the estimated local event rate and stops once the exact reliability boundary is reached.}
    \vspace{-5mm}
    \label{fig.example}
\end{figure*}

\textbf{Theoretical Analysis.}
\label{sec:method-guarantee}
A potential concern is that an improper final rate estimate may compromise supervision. We show that extrapolation error only cause a slight decoding cost, not supervision:

\begin{proposition}
\label{prop:safe}
$\hat{E}^{(b)} \ge E^{(b)}$ for every trajectory, for any $\rho \in (0,1]$, any $W$, any realization of $\hat\lambda$.
\end{proposition}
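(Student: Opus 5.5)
The argument rests on two facts: the count $c_\tau$ is monotone, and the stopping test in Algorithm~\ref{alg:ict} uses only the exact count $c_k$, never $\hat\lambda$. Fix a trajectory $b$ and drop the superscript. Let $k_0<k_1<\dots<k_m$ be the verification positions visited by the algorithm, so that $\hat E=k_m$.

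The first step is to show that this sequence is well defined whatever $\hat\lambda$ does. Each advance in Eq.~\eqref{eq:step} is at least $k_{\min}\ge 1$. The estimate $\hat\lambda_k$ only changes the length of a step, so the positions strictly increase and the loop ends after finitely many steps. It ends either because $c_{k_m}\ge B$ or because $k_m\ge L_{\max}$. If $\hat\lambda_k=0$, we read the ceiling as $+\infty$, and the position is then capped by $L_{\max}$ (and by EOS); this changes only the location of the next check, not the stopping rule.

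The second step is a monotonicity lemma: since $\phi_i\in\{0,1\}$, the count $c_\tau=\sum_{i\le\tau}\phi_i$ is nondecreasing in $\tau$. By \emph{Locality} and \emph{Permanence} from the preliminaries (Appendix~\ref{appendix_theory}), the value $c_k$ computed at a verification point equals the count $c_k$ of the full realized trajectory. Later decoding never revises it, so the algorithm's $c^{(b)}$ at position $k$ is exactly the $c_k$ that enters the definition of $E$ in Eq.~\eqref{eq:usable}.

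The third step is a case split on the reason for stopping.
\begin{itemize}
\item If the trajectory stops because $c_{\hat E}\ge B$, then $\hat E$ belongs to the set $\{\tau: c_\tau\ge B\}$. Hence $E=\min\{\tau: c_\tau\ge B\}\le \hat E$ by the definition of the minimum.
\item If it stops with $c_{\hat E}<B$, so only because $\hat E\ge L_{\max}$ (or EOS, i.e.\ the trajectory has length $L\le\hat E$), then monotonicity gives $c_\tau\le c_{\hat E}<B$ for all $\tau\le \hat E$. The boundary therefore does not lie at or before $\hat E$. Under the paper's convention, $E$ is either undefined on the observed range or set to the trajectory length, and in both readings $\hat E\ge E$ holds trivially. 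This is the "boundaries reached before EOS or the length cap" caveat from the introduction.
\end{itemize}
Neither case uses $\rho$, $W$, or the values of $\hat\lambda$. This gives the universality claimed in Proposition~\ref{prop:safe}.

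The main obstacle is not the inequality itself but pinning down the conventions. One issue is that $E$ is defined as a minimum over a possibly empty set. The other is the clipping of verification positions at $L_{\max}$ or EOS, including the edge case $\hat\lambda=0$. I would handle both with the convention $E:=L$ when the set is empty and with the capping rule from the first step. I would also remark that the overshoot $\hat E-E$ is at most the length of the last step, which is why extrapolation error costs only extra decoding and not supervision.
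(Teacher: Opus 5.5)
Your first case is the paper's proof. The stopping test reads only the exact count $c_k$, and $\hat\lambda$ only places the next checkpoint. So $c_{\hat E}\ge B$ gives $E\le\hat E$ directly from the definition of $E$ as a minimum; the paper states the same thing contrapositively, saying the test cannot fire at any $k<E$. The well-definedness step and the appeal to Locality/Permanence are harmless additions. Your overshoot remark is also correct, since $c_{k_{m-1}}<B$ forces $E>k_{m-1}$.

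Your second case does not hold up as written. The paper never asserts $\hat E\ge E$ for a trajectory retired by $L_{\max}$ or EOS without a crossing. Its proof covers only stops triggered by $c_k\ge B$. Proposition~\ref{prop:safecap} sets $E=\infty$ when the count never reaches $B$ and proves $\hat E=\min(E,L_{\max},L)$ instead, under which $\hat E<E$ in precisely your case 2. So it is incorrect to attribute to the paper a convention that makes the inequality trivial.

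``In both readings'' also fails on its own terms. If $E$ is undefined, the inequality is meaningless rather than true. Your own observation that the boundary does not lie at or before $\hat E$ means that if the sampled response would have continued past $L_{\max}$ and crossed $B$ later, then $E>\hat E$. This case does arise: on G1, $L_{\max}=4400$ lies below the $12288$-token response limit, and about $96$ of $256$ trajectories per batch retire at the cap. For those trajectories the inequality cannot be asserted.

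You can rescue the inequality only by defining $E$ on the capped rollout, with $E:=\min(L,L_{\max})$ when no crossing occurs. That makes case 2 a tautology about the clipped process rather than a statement about the compatibility boundary. The cleaner course is the paper's: restrict the guarantee to trajectories whose boundary is reached before EOS or the cap, i.e., to allowance-triggered stops, and record the remaining cases via $\hat E=\min(E,L_{\max},L)$. The introduction's caveat that you cite excludes those cases from the guarantee; it does not say the guarantee holds trivially there.
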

\begin{proof}
Termination occurs only under $c_k \ge B$, evaluated on the exact count. Since $c_\tau$ is non-decreasing and $E^{(b)} = \min\{\tau : c_\tau \ge B\}$, this cannot hold at any $k < E^{(b)}$. The estimate $\hat\lambda$ enters only the placement of subsequent verification boundaries, never the termination test.
\end{proof}


\textbf{Complexity Analysis.}
\label{sec:method-complexity}
\method{} replaces a long generation pass with $R$ shorter ones, so the key questions are how large $R$ can be and how much each round costs. Let $R$ denote the number of verification rounds. Since each advance in Eq.~\eqref{eq:step} is at least $k_{\min}$ and $k\le L_{\max}$, we have $R \le 1+\lceil(L_{\max}-k_0)/k_{\min}\rceil$. When the local event rate is approximately stable, each round consumes a fraction $\rho$ of the remaining allowance, yielding $R=O(\log B)$. The per-step cost consists mainly of decoding and teacher evaluation, and our method do not incur additional overhead beyond these. Importantly, with prefix caching, each token is decoded only once, so the generation cost remains proportional to the actual rollout length. Compared with the full rollout length $L_{max}$, \method{} substantially reduces this cost by terminating generation around the usable length $E$.

%% file: contents/5_Experiments.tex
\section{Experiments}
\label{sec:experiments}

In this section, we conduct extensive experiments to investigate the following research questions:

\begin{itemize}[leftmargin=2.5mm,parsep=3pt]
    \item \textbf{RQ1:} Does \method{} consistently improve the accuracy--cost trade-off? \textit{(\textbf{Effectiveness})}
    \item \textbf{RQ2:} Where exactly do the wall-clock savings of \method{} come from? \textit{(\textbf{Efficiency})}
    \item \textbf{RQ3:} How does \method{} improve training dynamics through adaptive rollout? \textit{(\textbf{Mechanism})}
    \item \textbf{RQ4:} Do the gains of \method{} transfer reliably out of distribution? \textit{(\textbf{Transferability})}
    \item \textbf{RQ5:} How do the efficiency gains of \method{} scale with larger teacher size? \textit{(\textbf{Scalability})}
    \item \textbf{RQ6:} How does \method{} work, and does hyperparameter tuning affect it? \textit{(\textbf{Robustness})}
\end{itemize}

\subsection{Experimental Setup}
\label{sec:setup}

\noindent\textbf{Datasets and Baselines.}
We compare vanilla OPD~\citep{lu2025onpolicy}, and representative methods for open-loop and closed-loop control: TOPD~\citep{zhang2026full}, and Prune-OPD~\citep{yang2026prune}, all trained on DAPO-Math-17K~\citep{yu2025dapo}. Evaluation covers distribution-aligned math benchmarks (AMC23, AIME24/25, HMMT24/25) and transfer benchmarks (MATH-500, Minerva Math, Olympiad-Bench, GPQA), with GPQA testing beyond mathematics. Details are in Appx.~\ref{appendix_datasets}.

\noindent\textbf{Models and Protocol.}
To assess generality across representative OPD settings, we pair DeepSeek-R1-Distill-Qwen-1.5B with JustRL-DeepSeek-1.5B (same-scale, post-training-enhanced teacher) or DeepSeek-R1-Distill-Qwen-7B (larger teacher, similar lineage), and Qwen3-1.7B/4B-Base with Qwen3-4B (Non-thinking), varying student scale and Base-to-post-trained distribution gaps. All methods train for 203 steps under identical optimization settings. We report mean@16 averaged over the final three steps (31,744-token decoding budget) and efficiency metrics, including training time. Baselines use recommended hyperparameters, while we use a fixed configuration unless otherwise specified: $\eta=0.7$, $B=200$, $\rho=1.0$, and $k_0=W=128$. For more details on methods, evaluation protocols, parameter settings, training hardware, etc., please refer to Appendix~\ref{appendix_protocol}.

\subsection{Effectiveness Study (RQ1)}

Table~\ref{tab:main-results} compares \method{} with vanilla OPD and advanced OPD acceleration methods across four diverse student--teacher pairs and five in-domain benchmarks. We make three observations:

\input{tables/overall_performance}

\ding{182} \method{} consistently improves the accuracy--cost trade-off across model pairs. Compared with vanilla OPD, it achieves $2.23\times$--$7.52\times$ speedups while largely preserving, and in most settings even improving, average performance.
\ding{183} Interestingly, on Qwen3-4B-Base / Qwen3-4B, \method{} delivers not only substantial acceleration but also a notable accuracy gain. We find that the student in this setting is prone to uncontrolled response-length growth during on-policy distillation, where overly long rollouts can introduce low-quality or even harmful supervision. In contrast, \method{} truncates trajectories once the supervision signal becomes unreliable. This adaptive filtering may therefore act as an implicit curriculum and regularizer, leading to better optimization.
\ding{184} Compared with existing acceleration methods, \method{} is the fastest in all four settings while maintaining competitive accuracy. The advantage is pronounced on the Qwen family, where it achieves $7.14\times$ and $7.52\times$ speedups, substantially outperforming existing baselines.





\subsection{Efficiency Study (RQ2)}

To understand where the efficiency gains come from, Figure~\ref{fig.steptime} decomposes the mean wall-clock time per training step into rollout generation, log-probability computation, teacher verification, policy updates, and residual overhead across the four model pairs. We make three observations:\\
\begin{wrapfigure}{c}{11.5cm}
    \begin{center}
    \vspace{-6mm}
    \includegraphics[width=1.0\linewidth]{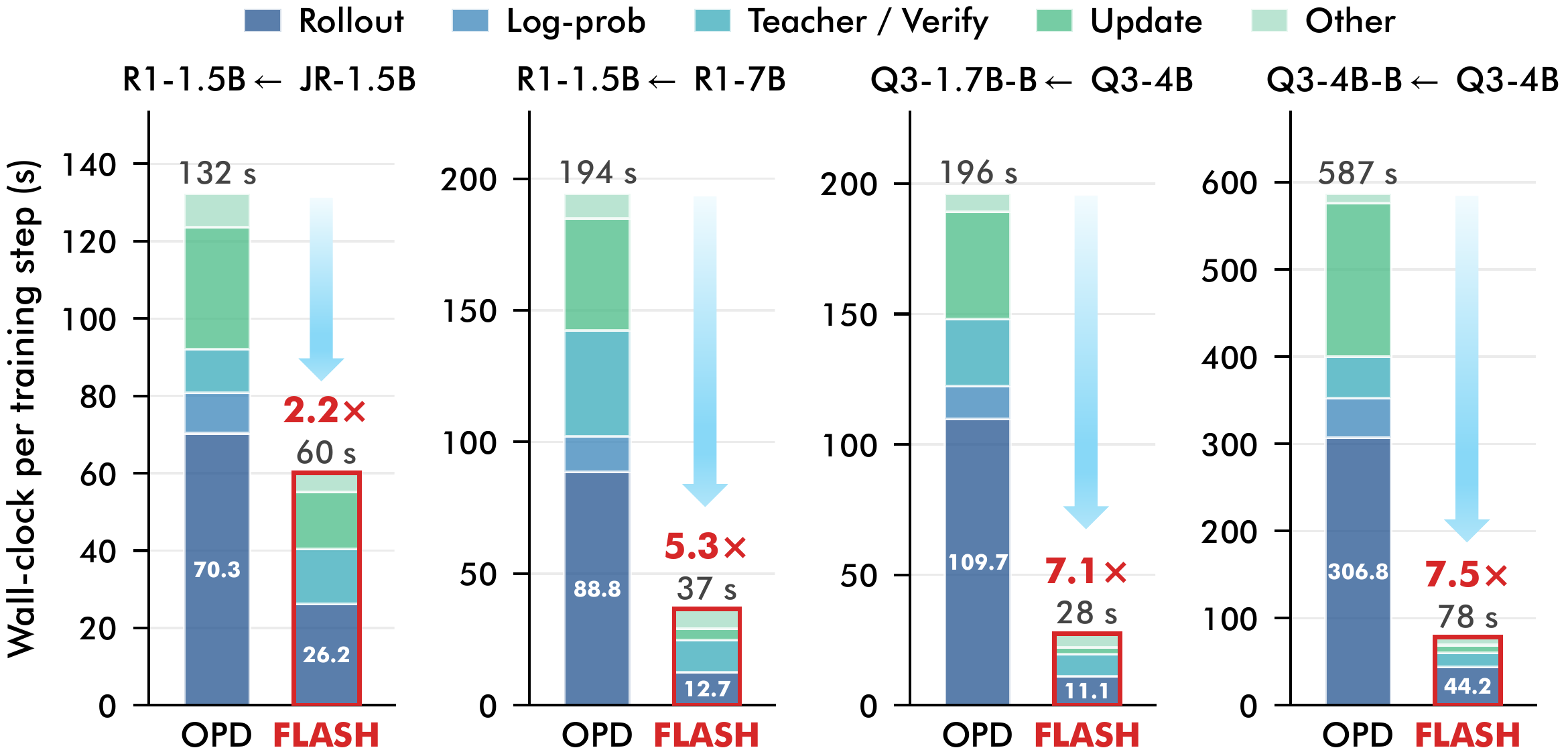}
    \vspace{-6mm}
    \caption{Mean wall-clock seconds per training step.}
    \label{fig.steptime}
    \vspace{-10mm}
    \end{center}
\end{wrapfigure}
\ding{182} \method{} substantially reduces per-step training time, primarily by shortening rollout generation. \ding{183} \method{} evaluates both student and teacher during verification and reuses the results, eliminating the separate log-probability pass. \ding{184} By terminating rollouts early, \method{} further reduces the costly policy-update time, extending generation savings to subsequent stages.

\subsection{Mechanism Study (RQ3)}

We further investigate the mechanisms of our method compared to vanilla OPD through training dynamics. Figure~\ref{fig.frontier} compares the relationship between accuracy and wall-clock training time for \method{} relative to OPD across four model pairs. We observe the following phenomena:

\begin{figure}[htbp!]
    \centering
    \includegraphics[width=1.0\textwidth]{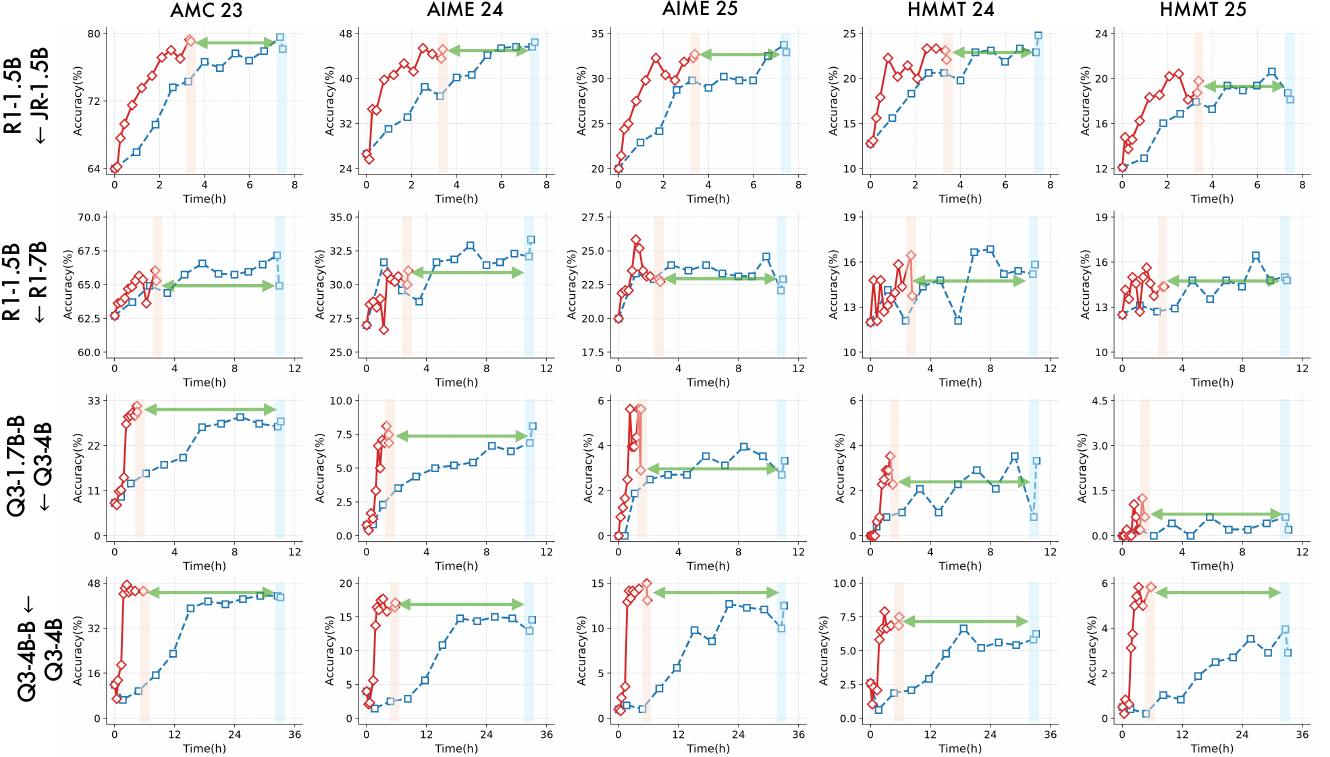}
    \caption{Wall-clock training dynamics of vanilla OPD and our \method{} across four teacher-student model pairs and five benchmarks. We perform a benchmark evaluation every 20 steps.}
    \label{fig.frontier}
\end{figure}

\ding{182} \method{} consistently shifts the learning curves toward the upper left: across model pairs and benchmarks, it reaches the accuracy regime of OPD substantially earlier, indicating faster optimization progress per unit wall-clock time.
\ding{183} On the DeepSeek-family pairs, \method{} largely preserves the optimization trajectory of OPD in one setting while compressing it in time. In the other, both trajectories are relatively noisy and yield limited gains, consistent with prior observations~\citep{li2026rethinking} that a teacher from the same training pipeline may not provide sufficiently novel and transferable supervision on the states visited by the student.
\ding{184} On the Qwen-family pairs, we surprisingly observe a sharp early improvement with \method{}, whereas vanilla OPD improves much more gradually over a substantially longer horizon. This is most evident for Qwen3-4B-Base $\leftarrow$ Qwen3-4B, where \method{} not only reaches strong performance within a few hours, but also converges to a higher-accuracy regime on all five benchmarks. This suggests that adaptive rollout truncation can improve optimization itself, potentially by filtering out low-value supervision in later portions of the trajectory.
\ding{185} OPD often continues to consume substantial wall-clock time after entering a plateau, yielding only marginal gains or oscillations. In contrast, \method{} concentrates computation in the earlier, more productive stages of training, suggesting that shorter, higher-confidence supervision is often easier for the current policy to absorb. By prioritizing tokens that the policy can effectively exploit, rather than forcing it to match the teacher's full long trajectory at once, \method{} may reduce gradient variance and error accumulation, leading to more efficient optimization.


\subsection{Transferability Study (RQ4)}

To examine whether the benefits of \method{} extend beyond in-domain evaluation and same-lineag teacher--student pairs, we study transferability along two dimensions: evaluation distribution and model generation. Figure~\ref{fig.transfer} summarizes the results. We make three observations:

\begin{figure}[htbp!]
    \centering
    \includegraphics[width=1.0\textwidth]{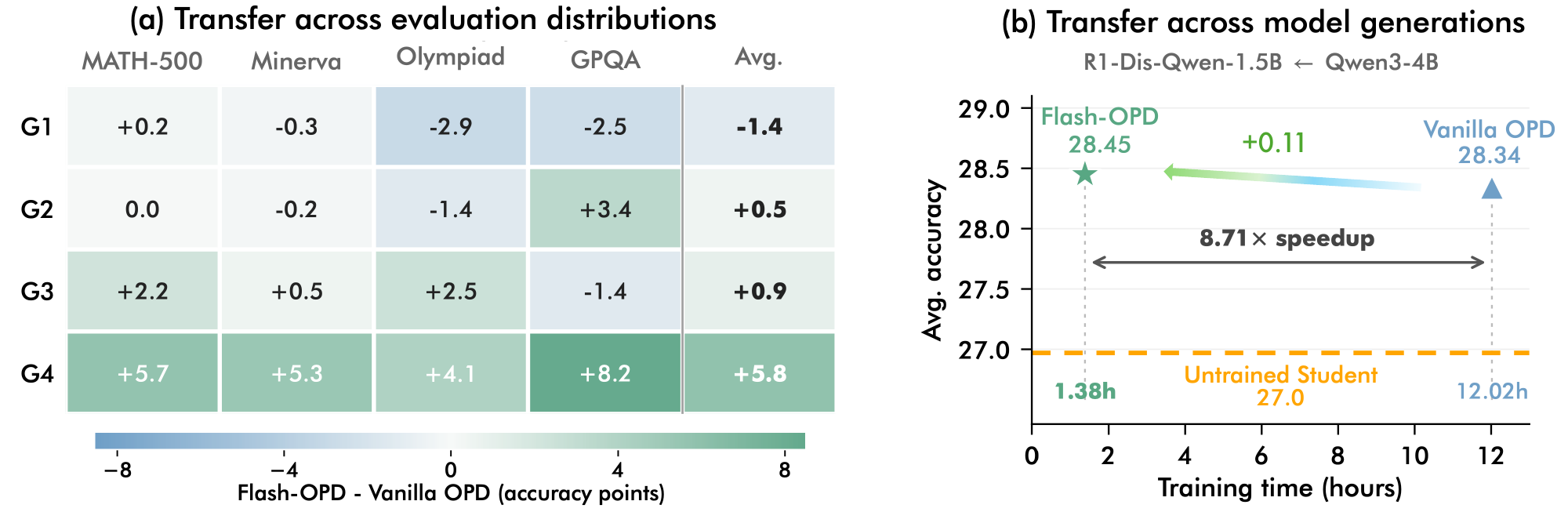}
    \caption{Transferability of \method{} across evaluation distributions and model generations.}
    \label{fig.transfer}
\end{figure}

\ding{182} Across unseen evaluation distributions, \method{} largely preserves OPD performance: the macro differences on G1--G3 remain within $1.4$ points, despite including GPQA, a non-mathematical benchmark.
\ding{183} Transfer can also be favorable: on G4, \method{} consistently improves all four held-out benchmarks, yielding a $+5.84$-point macro gain.
\ding{184} The efficiency gain persists across model families without pair-specific tuning: on R1-1.5B$\leftarrow$Qwen3-4B, \method{} matches OPD in macro accuracy ($28.45$ vs.\ $28.34$) while reducing training time from $12.02$h to $1.38$h ($8.71\times$).

\subsection{Scalability Study (RQ5)}


To examine whether \method{} remains effective with increasingly large teachers, Figure~\ref{fig.scalability} scales the teacher from Qwen3-4B to 14B and 32B, fixing the Qwen3-1.7B-Base student. We observe that:

\begin{figure}[htbp!]
    \centering
    \includegraphics[width=1.0\textwidth]{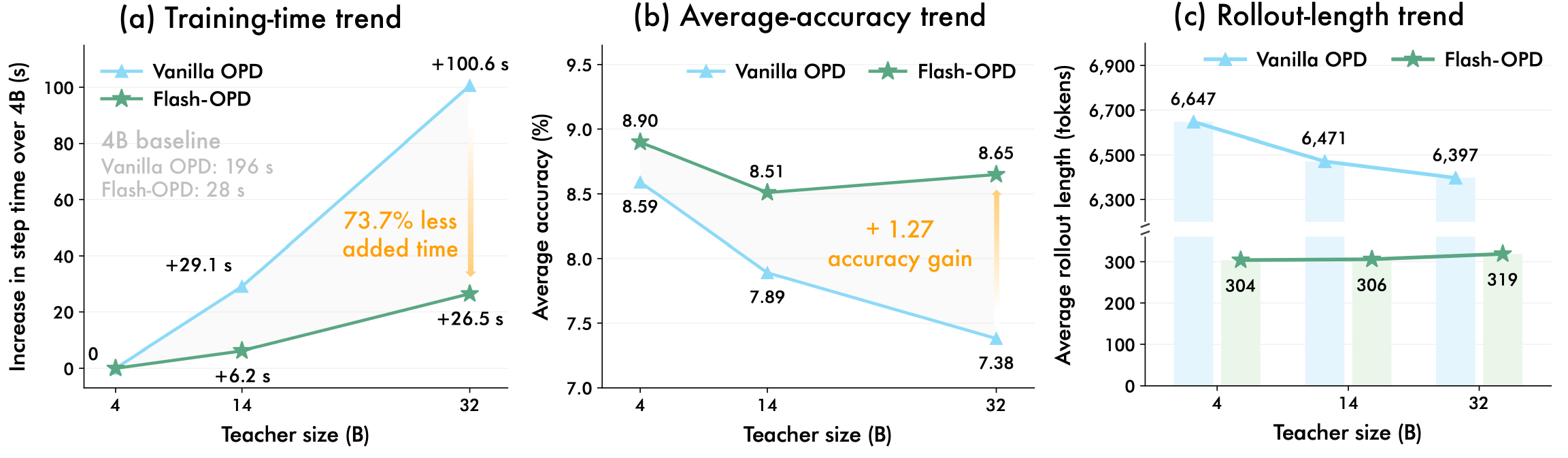}
    \caption{With a fixed student, \method{} incurs smaller step-time increases than OPD while maintaining stable accuracy and consistently short rollouts as teacher size grows from 4B to 32B.}
    \label{fig.scalability}
\end{figure}

\ding{182} \method{} remains substantially faster: from 4B to 32B, per-step time rises by only $26.5$\,s \textit{vs.} $100.6$\,s for vanilla OPD, a $73.7\%$ reduction in added time.
\ding{183} Accuracy remains stable at $8.5$--$8.9\%$, while OPD drops from $8.59\%$ to $7.38\%$, giving \method{} a $1.27$ percentage-point advantage at 32B.
\ding{184} Larger teachers do not consistently improve OPD, whose accuracy varies with rollout length. We maintains nearly constant rollout lengths, yielding more stable training across teacher scales.

\subsection{Robustness Study (RQ6)}


To understand how controller settings affect OPD behavior, Figure~\ref{fig.robustness} examines the joint variation of the $\eta$ and $B$, as well as the impact of $\rho$ on validation. We can observe that:

\begin{figure}[htbp!]
    \centering
    \includegraphics[width=1.0\textwidth]{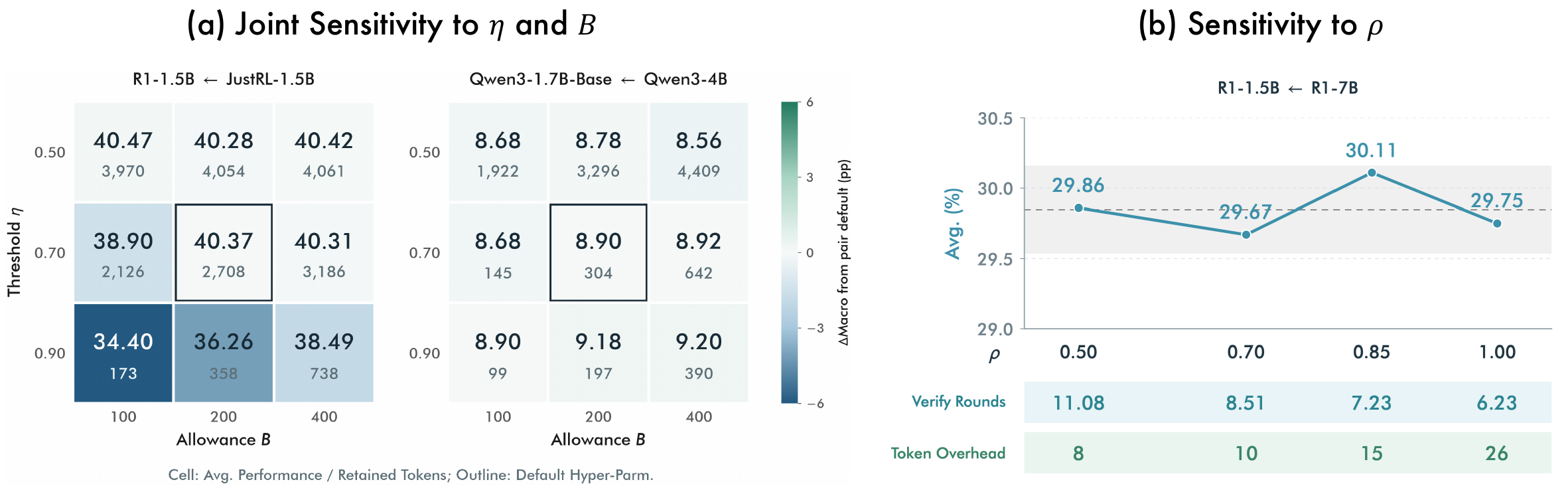}
    \caption{Sensitivity of \method{} to the drift threshold $\eta$, allowance $B$, and step discount $\rho$.}
    \label{fig.robustness}
\end{figure}

\ding{182} \method{} remains robust to broad variations in controller parameters, with degradation mainly at extreme thresholds that aggressively shift stopping boundaries.
\ding{183} The threshold and allowance interact: larger $B$ partly mitigates degradation at high $\eta$ but has little effect at low $\eta$.
\ding{184} The step discount controls the verification trade-off; however, as proof shown earlier, the growth of redundant token generation is quite limited, and overall performance remains relatively stable.

%% file: tables/overall_performance.tex
\begin{table*}[htbp!]
    \vspace{-2mm}
    \centering
    \small
    \renewcommand\arraystretch{1.15}
    \setlength{\tabcolsep}{8pt}
    \caption{Overall performance-efficiency comparison under different benchmarks and settings.}
    \vspace{-2mm}
    \label{tab:main-results}
    \resizebox{1.0\linewidth}{!}{
    \begin{tabular}{l|c|c|c|c|c|c|c|c}
       \toprule
       \textbf{Method} & \textbf{AMC23} & \textbf{AIME24} & \textbf{AIME25} & \textbf{HMMT24} & \textbf{HMMT25} & \textbf{Average} & \textbf{Time} & \textbf{Speed Up} \\
       \midrule
       \rowcolor{gray!10} \multicolumn{9}{c}{\textit{Student / Teacher: DeepSeek-R1-Distill-Qwen-1.5B / JustRL-DeepSeek-1.5B}} \\
       OPD & 78.16 & 46.46 & 32.92 & 24.79 & 18.12 & 40.09 & 7.46h & - \\
       TOPD & 79.67 & 45.83 & 33.33 & 24.79 & 19.38 & \textbf{40.60} & 3.51h & 2.12$\times$ \\
       Prune-OPD & 78.92 & 44.79 & 31.25 & 22.92 & 20.62 & 39.70 & \underline{3.36h} & 2.22$\times$ \\
       \rowcolor{cyan!8} \textbf{\method} & 79.37 &  46.46 &  32.92 &  22.29 &  20.83 & \underline{40.37} & \textbf{3.35h} & 2.23$\times$ \\
       \midrule
       \rowcolor{gray!10} \multicolumn{9}{c}{\textit{Student / Teacher: DeepSeek-R1-Distill-Qwen-1.5B / DeepSeek-R1-Distill-Qwen-7B}} \\
       OPD & 64.91 & 33.33 & 22.92 & 15.83 & 14.79 & \underline{30.36} & 10.95h & - \\
       TOPD & 64.91 & 32.71 & 23.75 & 16.25 & 15.00 & \textbf{30.52} & 4.15h & 2.64$\times$ \\
       Prune-OPD & 65.36 & 30.83 & 24.38 & 13.75 & 15.42 & 29.95 & \underline{3.81h} & 2.88$\times$ \\
       \rowcolor{cyan!8} \textbf{\method} & 66.27 & 30.42 & 24.17 & 13.33 & 14.58 & 29.75 & \textbf{2.06h} & 5.32$\times$\\
       \midrule
       \rowcolor{gray!10} \multicolumn{9}{c}{\textit{Student / Teacher: Qwen3-1.7B-Base / Qwen3-4B (Non-thinking)}} \\
       OPD & 27.94 & 8.12 & 3.33 & 3.33 & 0.21 & \underline{8.59} & 11.06h & - \\
       TOPD & 28.35 & 7.85 & 3.13 & 2.15 & 0.28 & 8.35 & 4.80h & 2.30$\times$ \\
       Prune-OPD & 28.43 & 6.74 & 3.89 & 2.36 & 0.49 & 8.38 & \underline{3.04h} & 3.64$\times$ \\
       \rowcolor{cyan!8} \textbf{\method} & 29.29 &   6.46 &   5.00 &   3.33 &   0.42 &  \textbf{8.90} & \textbf{1.55h} & 7.14$\times$ \\
       \midrule
       \rowcolor{gray!10} \multicolumn{9}{c}{\textit{Student / Teacher: Qwen3-4B-Base / Qwen3-4B (Non-thinking)}} \\
       OPD & 43.00 & 14.58 & 12.50 & 6.25 & 2.92 & \underline{15.85} & 33.09h & - \\
       TOPD & 42.09 & 15.00 & 12.50 & 5.21 & 3.12 & 15.58 & \underline{7.81h} & 4.24$\times$ \\
       Prune-OPD & 40.06 & 15.42 & 10.83 & 4.79 & 2.71 & 14.76 & 13.87h & 2.39$\times$ \\
       \rowcolor{cyan!8} \textbf{\method} & 48.72 &  18.75 &  15.83 &   7.08 &   5.62 & \textbf{19.20}  &  \textbf{4.40h} & 7.52$\times$ \\
       \bottomrule
    \end{tabular}
    }
    \vspace{-2mm}
\end{table*}

%% file: contents/6_Conclusion.tex
\section{Conclusion}
\label{sec:conclusion}

In this paper, we investigate the mismatch between shared rollout horizons and trajectory-specific reliability boundaries in on-policy distillation. We propose \method{}, an adaptive boundary verification framework that alternates generation and verification, using exact compatibility-event counts to guide stopping decisions and local event rates to schedule verification. Extensive experiments on multiple teacher-student model pairs demonstrate its effectiveness, efficiency, transferability, scalability, and robustness. In future work, we aim to extend this efficient acceleration method for adaptive boundary verification to multi-teacher online policy distillation and agent tasks.

\clearpage

%% file: appendix/0_appendix_list.tex
\onecolumn
\renewcommand{\cftsecfont}{\bfseries}


\begin{center}
    \Large{\sc\Huge Supplementary material\\\large ~\\Flash-OPD: Fast On-Policy Distillation}\\
\end{center}
\vskip 4mm
\startcontents[sections]\vbox{\sc\Large Table of Contents}
\vspace{5mm}
\hrule height .8pt
\vspace{-2mm}
\printcontents[sections]{l}{1}{\setcounter{tocdepth}{2}}
\vspace{4mm}
\hrule height .8pt
\vskip 10mm


\clearpage
\input{appendix/6_morerelated}
\input{appendix/1_theoretical}
\input{appendix/2_method}

\input{appendix/3_expsetting}

\input{appendix/4_moreresult}
\input{appendix/5_morediscussion}

%% file: appendix/6_morerelated.tex
\section{More Related Work}\label{appendix_related}

\subsection{Distillation Objectives}

Classical knowledge distillation trains a student to match teacher predictions on a fixed corpus~\citep{hinton2015distilling,sanh2019distilbert,jiao2020tinybert,wang2020minilm}. On-policy distillation instead queries the teacher on student-generated prefixes, reducing the mismatch between training states and those visited by the student~\citep{ross2011reduction,gu2024minillm,agarwal2024policy,lu2025onpolicy}. Recent work improves the supervision at each position through reward extrapolation~\citep{yang2026learning}, revised divergence objectives~\citep{jang2026stable,jin2026entropy,ko2026scaling,fu2026revisiting}, and analyses of teacher--student compatibility~\citep{li2026rethinking,li2025small,kim2026does}. Self-distillation extends the source of supervision beyond a fixed external teacher~\citep{shenfeld2026selfdistillation,hubotter2026reinforcement,zhao2026selfdistilled,ye2026online}. These directions change the training signal, whereas \method{} controls how far each trajectory is generated while retaining the token-level objective on its prefix.

\subsection{Decoding and Training Efficiency}

Speculative decoding reduces generation cost by verifying blocks of proposed tokens. Its verification determines which proposed tokens to accept. The verification in \method{} instead determines whether a student trajectory should continue receiving distillation supervision; it does not replace generated tokens with teacher samples. Improvements to decoding or other training stages~\citep{wu2026lightningopd,cai2026learning} address costs complementary to rollout truncation. Methods that train models to reason in fewer tokens at inference time~\citep{accordion,sang2026crispcompressedreasoningiterative} target the deployed model's response length rather than the length of training rollouts.

%% file: appendix/1_theoretical.tex
\section{More Analysis}\label{appendix_theory}

This section proves the prefix properties used in Section~\ref{sec:prelim} and gives details of the stopping guarantee and verification complexity.

\subsection{Locality and Permanence}\label{proof_local}

\begin{lemma}[Locality and Permanence]\label{lem:local}
Fix the student and teacher parameters during rollout collection. At each position $\tau$, the per-position reward $r_{\tau,\cdot}$ depends only on $(x,y_{<\tau})$. Any two trajectories with the same prefix $y_{<\tau}$ therefore receive the same reward at position $\tau$, regardless of their later tokens.
\end{lemma}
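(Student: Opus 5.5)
The plan is to unfold the definitions and check that nothing computed at position $\tau$ depends on tokens at or beyond $\tau$.

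First, by Eq.~\eqref{eq:dists}, $p_\tau(\cdot)=\pi_\theta(\cdot\mid x,y_{<\tau})$ and $q_\tau(\cdot)=\pi_T(\cdot\mid x,y_{<\tau})$. Since $\theta$ and the teacher are frozen during rollout collection, each is a deterministic function of the conditioning context $(x,y_{<\tau})$. Autoregressive causality of both models is what makes the conditional at $\tau$ ignore later tokens: causal masking ensures the logits at position $\tau$ are computed only from positions $<\tau$. I would state this explicitly, because it is the only place where an implementation detail enters. It requires that batched or padded forward passes and cached key--value states reproduce the same logits as an isolated forward on the prefix. I take this as part of the hypothesis ``parameters fixed,'' together with deterministic numerics.

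Second, the reward vector $r_{\tau,\cdot}$ is built from three ingredients: the student's top-$k$ candidate set $K^S_\tau$, and the values $p_\tau(v)$ and $q_\tau(v)$ for $v$ in that set. Typical forms are $r_{\tau,v}=\log q_\tau(v)-\log p_\tau(v)$, or the truncated reverse-KL terms. Each ingredient is a fixed measurable function of the pair $(p_\tau,q_\tau)$; for $K^S_\tau$ we assume a fixed tie-breaking rule. By composition, $r_{\tau,\cdot}=F(p_\tau,q_\tau)=G(x,y_{<\tau})$ for a fixed map $G$.

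The second sentence of the lemma follows immediately. If two trajectories $y$ and $y'$ share the prefix $y_{<\tau}$, then $G(x,y_{<\tau})=G(x,y'_{<\tau})$, so they receive the same reward at $\tau$ whatever their later tokens $y_{\ge\tau}$ and $y'_{\ge\tau}$ are. Permanence is the special case $y'=$ an extension of $y$. The same argument applies verbatim to $\phi_\tau$ in Eq.~\eqref{eq:phi}, since it depends on $K^S_\tau$ and $K^T_\tau$. Hence $c_\tau$ depends only on $(x,y_{<\tau})$, up to including $y_\tau$'s own prefix for index $\tau$. This is what Proposition~\ref{prop:safe} and the incremental verification rely on.

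I expect the main obstacle to be the first step, justifying that the computed conditionals truly ignore later tokens. I would handle it by citing the causal factorization $\pi(y\mid x)=\prod_\tau\pi(y_\tau\mid x,y_{<\tau})$ and the causal attention mask, and by noting that the sampling temperature and top-$k$ selection are applied per position after the logits are formed.
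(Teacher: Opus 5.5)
Your proposal is correct and follows essentially the same route as the paper. Both arguments note that $p_\tau$ and $q_\tau$ are determined by $(x,y_{<\tau})$ by causality, so the student's top-$k$ candidates, their weights, and hence $r_{\tau,\cdot}$ are fixed functions of that prefix that later tokens cannot change; your extra caveats (causal masking in batched or cached forwards, tie-breaking for $K^S_\tau$, deterministic numerics) and the extension to $\phi_\tau$ and $c_\tau$ are harmless refinements the paper leaves implicit.
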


\begin{proof}
By causality, $p_\tau=\pi_\theta(\cdot\mid x,y_{<\tau})$ and $q_\tau=\pi_T(\cdot\mid x,y_{<\tau})$ are determined by the prefix. The student's top-$k$ candidates and their weights are functions of $p_\tau$. Both $D_{\mathrm{KL}}(p_\tau\|q_\tau)$ and its top-$k$ approximation are therefore determined by the same prefix. Extending the trajectory changes neither distribution at position $\tau$ nor the resulting reward.
\end{proof}

\begin{corollary}[Prefix supervision]\label{cor:trunc}
For any retained length $S\le L$, the truncated sum $\sum_{\tau=1}^{S}D_{\mathrm{KL}}(p_\tau\|q_\tau)$ equals the first $S$ terms of the full-trajectory sum in Eq.~\eqref{eq:opd}.
\end{corollary}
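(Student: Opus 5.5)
The statement to prove is Corollary~\ref{cor:trunc}, and the plan is to derive it directly from Lemma~\ref{lem:local}. Fix a trajectory $y=(y_1,\dots,y_L)$ and a retained length $S\le L$. Write $y'=y_{\le S}$ for the truncated trajectory. The claim is that its per-position divergences coincide with those of the full trajectory for every $\tau\le S$; the sum identity then follows termwise.

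First, for each $\tau\in\{1,\dots,S\}$, observe that the prefix of $y'$ before position $\tau$ is $y'_{<\tau}=y_{<\tau}$. Truncation at $S\ge\tau$ removes only tokens at positions $>S\ge\tau$, none of which lie in $y_{<\tau}$. Lemma~\ref{lem:local}, or equivalently causality of both models with parameters held fixed, then gives
\[
p'_\tau=\pi_\theta(\cdot\mid x,y'_{<\tau})=p_\tau,\qquad q'_\tau=\pi_T(\cdot\mid x,y'_{<\tau})=q_\tau .
\]
Hence $D_{\mathrm{KL}}(p'_\tau\|q'_\tau)=D_{\mathrm{KL}}(p_\tau\|q_\tau)$. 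The same equality holds for the top-$k$ approximated reward $r_{\tau,\cdot}$, since it is a function of $(p_\tau,q_\tau)$ alone.

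Second, summing over $\tau=1,\dots,S$ yields
\[
\sum_{\tau=1}^{S}D_{\mathrm{KL}}(p'_\tau\|q'_\tau)=\sum_{\tau=1}^{S}D_{\mathrm{KL}}(p_\tau\|q_\tau),
\]
and the right-hand side is exactly the first $S$ terms of the inner sum in Eq.~\eqref{eq:opd}.

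There is no substantive obstacle. The only point needing care is the quantifier bookkeeping: the prefix $y_{<\tau}$ must be unaffected by truncation precisely when $\tau\le S$. One should also state explicitly that $\theta$ is frozen during rollout, so that $p_\tau$ is the same function in both evaluations; this is the hypothesis of Lemma~\ref{lem:local}.
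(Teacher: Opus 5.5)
Your proposal is correct and matches the paper's proof: truncation at $S$ leaves every prefix $y_{<\tau}$ with $\tau\le S$ unchanged, so Lemma~\ref{lem:local} applies at each retained position and the sum identity follows term by term. Your explicit note that the parameters must be frozen during rollout is the lemma's hypothesis, which the paper's one-line proof leaves implicit.
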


\begin{proof}
Truncation leaves every prefix $y_{<\tau}$ with $\tau\le S$ unchanged, so Lemma~\ref{lem:local} applies to each retained.
\end{proof}

The implementation averages the loss over retained tokens. Truncation changes this normalization and removes the tail terms, but does not change the per-position rewards on the retained prefix.

\subsection{Stopping Guarantee}\label{proof_safe}

For termination triggered by the allowance, Proposition~\ref{prop:safe} follows from the monotonicity of $c_\tau$: the test $c_k\ge B$ cannot succeed before $E$. The rate estimate affects the next verification position, not this test. The implementation also handles EOS and the length limit as follows.

\begin{proposition}[Stopping with EOS and a length limit]\label{prop:safecap}
Let $L$ be the natural termination length and set $E=\infty$ if the event count never reaches $B$. Algorithm~\ref{alg:ict-impl}, which locates the first crossing inside each verified block, retains
\[
\hat E=\min(E,L_{\max},L).
\]
\end{proposition}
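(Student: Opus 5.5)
We need to show that the retained length equals $\min(E,L_{\max},L)$. The plan is to show two inequalities, $\hat E\ge \min(E,L_{\max},L)$ and $\hat E\le \min(E,L_{\max},L)$, by tracking the state of one trajectory through its verification rounds. Algorithm~\ref{alg:ict-impl} is not reproduced above, so we take its structure to be that of Algorithm~\ref{alg:ict} with three modifications. First, decoding toward a verification point also halts at EOS or at $L_{\max}$, so the decoded prefix length is always at most $\min(L,L_{\max})$. Second, after teacher evaluation of a verified block $(k_{\mathrm{prev}},k]$, the algorithm scans that block for the first position $\tau$ with $c_\tau\ge B$ and, if one is found, truncates there. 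Third, a trajectory is retired as soon as the budget is crossed, EOS is emitted, or $L_{\max}$ is reached.

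The first step is a block-localization claim proved by induction on rounds. At the start of each round, with verified position $k_{\mathrm{prev}}$, we have $c_{k_{\mathrm{prev}}}<B$, $k_{\mathrm{prev}}<L$ and $k_{\mathrm{prev}}<L_{\max}$. Otherwise the trajectory would already have been retired. This uses the fact that $c_\tau$ is computed exactly from the prefix, which is valid by Lemma~\ref{lem:local}, since the rewards and top-$k$ sets on the verified prefix do not change as generation continues. Since $c_\tau$ is non-decreasing, $c_{k_{\mathrm{prev}}}<B$ gives $E>k_{\mathrm{prev}}$, exactly as in Proposition~\ref{prop:safe}. Hence all three candidate stopping times $E$, $L$, $L_{\max}$ lie strictly beyond the verified prefix at the start of every round.

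The second step analyses the round in which the trajectory is retired. Let $m=\min(E,L,L_{\max})$ and let $k'$ be the position reached by decoding in that round. The decoding rule gives $k'\le\min(L,L_{\max})$. The cases are as follows.
\begin{itemize}
\item If $E\le k'$, then $E\in(k_{\mathrm{prev}},k']$, so the in-block scan returns exactly $E$. Because $E\le k'\le\min(L,L_{\max})$, we get $\hat E=E=m$.
\item If $E>k'$, no crossing is found, and retirement must come from $k'=L$ or $k'=L_{\max}$. In that case $\hat E=k'=\min(L,L_{\max})$, and since $E>k'$ this value equals $m$.
\item If none of these conditions holds, the trajectory is not retired, and the invariant carries over to the next round.
\end{itemize}

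For termination, note that each advance is at least $k_{\min}\ge1$ and positions are capped at $L_{\max}$, so the process ends after the bounded number of rounds computed in the complexity analysis. The case $E=\infty$ is covered by the second case.

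The main obstacle is the first case. It requires the in-block scan to recover the exact crossing $E$ rather than the verification point $k'$, which would only give the weaker bound $\hat E\ge E$ of Proposition~\ref{prop:safe}. I will argue it from monotonicity of the exact prefix counts over the newly evaluated block, together with Corollary~\ref{cor:trunc}: truncating at $E$ leaves the supervision on $y_{\le E}$ unchanged, so retaining $\min(E,L_{\max},L)$ tokens is well defined.
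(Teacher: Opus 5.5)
Your proposal is correct and follows the same case analysis as the paper's proof. The retained length is $E$ when the first crossing falls inside a verified block, $L$ when the sampler emits EOS first, and $L_{\max}$ when the cap is reached first. Your round invariant ($c_{k_{\mathrm{prev}}}<B$ and $k_{\mathrm{prev}}<\min(L,L_{\max})$) makes explicit what the paper leaves implicit, namely that the first crossing found in a block is the global first crossing. The appeal to Corollary~\ref{cor:trunc} is not needed for the length identity itself.
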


\begin{proof}
If a verified block contains the first crossing, scanning its event counts identifies $E$ and retains exactly $y_{\le E}$. If no crossing occurs before $L_{\max}$, the algorithm retains all $L_{\max}$ generated positions. If the sampler terminates earlier without a crossing, it retains the complete response of length $L$. These cases give the stated minimum.
\end{proof}

\begin{remark}[Verification position and retained length]\label{rem:tighter}
Algorithm~\ref{alg:ict} uses the verification position as the retained length. The implementation in Algorithm~\ref{alg:ict-impl} instead finds the first crossing within the verified block and drops tokens after it. For a fixed trajectory, changing $\rho$ changes the verification positions but not this crossing. If the crossing is found at verification position $k$, the decoding overshoot is $k-E\ge0$.
\end{remark}

\subsection{Verification Complexity}\label{proof_rounds}

Assume $k_0\le L_{\max}$. Each round after the first advances by at least $k_{\min}$, except for a final step clipped to $L_{\max}$. Hence the number of rounds satisfies
\[
R\le 1+\left\lceil\frac{L_{\max}-k_0}{k_{\min}}\right\rceil.
\]
EOS or reaching the allowance can end the loop earlier.

The logarithmic behavior in Section~\ref{sec:method-complexity} describes a stable-rate approximation. Ignoring integer rounding, suppose a block of $m$ tokens adds $\lambda m$ events and $\hat\lambda=\lambda>0$. For residual allowance $a=B-c$, Eq.~\eqref{eq:step} advances $\rho a/\lambda$ tokens when the minimum step is inactive, leaving $(1-\rho)a$. For fixed $0<\rho<1$, the residual falls below one event after $O(\log B)$ rounds. At $\rho=1$, this approximation reaches the boundary in one additional round. Actual round counts depend on event-rate variation; the finite-length bound above holds independently of that approximation.

%% file: appendix/2_method.tex
\section{Method Details}\label{method_process}

\subsection{Batched Implementation}\label{app:method-alg}

Algorithm~\ref{alg:ict-impl} gives the batched implementation of \method{}, including termination at EOS or the length limit.

\input{tables/algorithm}

\textbf{Active trajectories.} Each round processes only trajectories that have not stopped. Requests are padded to a multiple of the worker count and unpadded on return. Trajectory indices remain fixed during the loop so that each prefix is matched to its own event count; length-based reordering is applied after the loop.

\textbf{Prefix resumption.} Each request extends $x^{(i)}\Vert y^{(i)}$ by at most $\delta^{(i)}$ tokens. Prefix caching avoids regenerating tokens from earlier rounds. Verification computes student and teacher distributions, and the controller reads the event indicators for the newly generated positions.

\textbf{Stopping within a block.} The controller scans the new positions in order and retains the prefix through the first position where $c_\tau\ge B$. If no crossing occurs, it retains the complete response when the sampler terminates, or stops at $L_{\max}$. A zero estimated event rate sends the next verification position to $L_{\max}$.

\subsection{Decoding and Verification Cost}\label{app:method-cost}

Algorithm~\ref{alg:ict-impl} records two token counts. $D$ counts generated positions, each once. $P$ sums the verified prefix lengths across rounds because verification forwards recompute the prefix without a persistent KV cache. Thus decoding cost depends on $D$, while verification cost depends on $P$.

Verification includes both a student forward pass, which supplies the student's top-$k$ candidates, and a teacher forward pass. Table~\ref{tab:mechanism} reports the number of rounds, verification tokens per retained token, and decoding overshoot. Table~\ref{tab:efficiency} gives the corresponding time breakdown.

\subsection{Rollout Engine Settings}\label{app:method-engine}

\textbf{Cache lifetime and memory.} The vLLM engine remains active throughout the verification loop and is put to sleep once all trajectories stop. We set \texttt{gpu\_memory\_utilization} to $0.6$ to leave memory for verification, with a verification micro-batch budget of $4096$ tokens.

\textbf{Reusing verification outputs.} Student log-probabilities and teacher outputs can be cached for each verified span and reused in the subsequent loss computation. Reuse is enabled only when every retained trajectory has complete coverage.

%% file: tables/algorithm.tex
\begin{algorithm}[t!]
\caption{\method: detail implementation version, one training step.}
\label{alg:ict-impl}
\begin{algorithmic}[1]
\Require prompts $\{x^{(i)}\}_{i=1}^{N}$; student $\pi_\theta$; frozen teacher $\pi_T$; allowance $B$; initial checkpoint $k_0$; discount $\rho$; window $W$; minimum advance $k_{\min}$; context limit $L_{\max}$; overlap threshold $\eta$; candidate-set size $k$
\Ensure retained prefixes $\{y^{(i)}_{\le\hat{E}^{(i)}}\}$; cost counters $D$ (decoded positions), $P$ (prefix positions)
\State $\mathcal{A} \leftarrow \{1,\dots,N\}$; \; $D \leftarrow 0$; \; $P \leftarrow 0$ \Comment{$\mathcal{A}$ is the active set}
\For{$i = 1,\dots,N$}
  \State $y^{(i)} \leftarrow (\,)$; \; $v^{(i)} \leftarrow 0$; \; $c^{(i)} \leftarrow 0$; \; $k^{(i)} \leftarrow \min(k_0,\, L_{\max})$ \Comment{$v$: verified positions, $c$: drift count}
\EndFor
\While{$\mathcal{A} \neq \emptyset$}
  \State $\delta^{(i)} \leftarrow k^{(i)} - v^{(i)}$ for every $i \in \mathcal{A}$ \Comment{tokens still owed to this trajectory's checkpoint}
  \State $Y \leftarrow \textsc{Generate}\big(\{x^{(i)} \Vert y^{(i)}\}_{i \in \mathcal{A}}\big)$ with per-request cap $\delta^{(i)}$ \Comment{resume from the prefix cache; retired rows not dispatched}
  \State $m^{(i)} \leftarrow \min\big(|Y^{(i)}|,\; \delta^{(i)}\big)$ \Comment{clamp: $\mathrm{pad}\!=\!\mathrm{eos}$ over-counts by one at the cap}
  \State $\Phi \leftarrow \textsc{Verify}\big(\{x^{(i)} \Vert y^{(i)} \Vert Y^{(i)}\}_{i \in \mathcal{A}}\big)$ \Comment{$\pi_\theta$ pass, then $\pi_T$ pass; outputs read on the new block only}
  \State $\phi^{(i)}_{v^{(i)}+j} \leftarrow \mathds{1}\big[\,|K^S \cap K^T|/k < \eta\,\big]$, \; $j = 1,\dots,m^{(i)}$ \Comment{Eq.~\eqref{eq:phi}}
  \For{$i \in \mathcal{A}$}
    \State $s \leftarrow \varnothing$
    \For{$j = 1,\dots,m^{(i)}$} \Comment{localise the crossing inside the verified block}
      \State $c^{(i)} \leftarrow c^{(i)} + \phi^{(i)}_{v^{(i)}+j}$; \; \textbf{if} $c^{(i)} \ge B$ \textbf{and} $s = \varnothing$ \textbf{then} $s \leftarrow v^{(i)} + j$
    \EndFor
    \State $y^{(i)} \leftarrow y^{(i)} \Vert Y^{(i)}_{1:m^{(i)}}$; \; $v^{(i)} \leftarrow v^{(i)} + m^{(i)}$; \; $D \leftarrow D + m^{(i)}$; \; $P \leftarrow P + v^{(i)}$
    \If{$s \neq \varnothing$}
      \State $\hat{E}^{(i)} \leftarrow s$; \; $\mathcal{A} \leftarrow \mathcal{A} \setminus \{i\}$ \Comment{retire: \textsf{allowance}}
    \ElsIf{$m^{(i)} < \delta^{(i)}$}
      \State $\hat{E}^{(i)} \leftarrow v^{(i)}$; \; $\mathcal{A} \leftarrow \mathcal{A} \setminus \{i\}$ \Comment{retire: \textsf{exhausted} (sampler emitted EOS)}
    \ElsIf{$v^{(i)} \ge L_{\max}$}
      \State $\hat{E}^{(i)} \leftarrow v^{(i)}$; \; $\mathcal{A} \leftarrow \mathcal{A} \setminus \{i\}$ \Comment{retire: \textsf{context}}
    \Else
      \State $W' \leftarrow \min\big(W,\, v^{(i)}\big)$; \quad $\hat\lambda \leftarrow \frac{1}{W'}\sum_{\tau = v^{(i)}-W'+1}^{v^{(i)}} \phi^{(i)}_{\tau}$ \Comment{trailing-window drift rate}
      \State $a \leftarrow L_{\max}$ \textbf{if} $\hat\lambda = 0$, \textbf{else} $a \leftarrow \big\lceil \rho\,(B - c^{(i)})/\hat\lambda \big\rceil$
      \State $k^{(i)} \leftarrow \min\big(v^{(i)} + \max(k_{\min},\, a),\; L_{\max}\big)$ \Comment{Eq.~\eqref{eq:step}}
    \EndIf
  \EndFor
\EndWhile
\State \Return $\{y^{(i)}_{\le\hat{E}^{(i)}}\}_{i=1}^{N}$ and $(D, P)$ \Comment{row order unchanged; length rebalancing runs only now}
\end{algorithmic}
\end{algorithm}
\vspace{-6mm}

%% file: appendix/3_expsetting.tex
\section{Experimental Details}

\subsection{Datasets}\label{appendix_datasets}

Table~\ref{tab.datasets} summarizes the training and evaluation datasets. All methods train on DAPO-Math-17K~\citep{yu2025dapo}, which contains $17{,}917$ competition-mathematics prompts. Each run uses $203$ steps of $64$ prompts, for $12{,}992$ sampled prompts without repetition.

\input{tables/dataset}

The in-domain evaluation contains $203$ problems across AMC23, AIME24, AIME25, HMMT24 and HMMT25. For transfer evaluation, we sample $250$ problems uniformly without replacement from each of MATH-500, Minerva and Olympiad-Bench, and use all GPQA. The resulting $948$ prompts are fixed across methods. Each response has a decoding budget of $31{,}744$ tokens, with $16$ samples per prompt in-domain and $4$ for transfer. GPQA evaluates generalization beyond mathematics.

\subsection{Teacher--Student Pairs}\label{appendix_pairs}

\input{tables/pairs}

We refer to the four pairs in Table~\ref{tab:pairs}, in order, as G1--G4. All use $\eta=0.7$, $B=200$ and $\rho=1.0$. G2--G4 use $k_0=W=128$ and $k_{\min}=16$. G1 uses $k_0=W=1024$, $k_{\min}=512$, and $L_{\max}=4400$, since its trusted prefix is typically longer.

\subsection{Baselines}\label{appendix_baseline}

\begin{itemize}[leftmargin=*,parsep=4pt,topsep=4pt]
\item \textbf{OPD}~\citep{lu2025onpolicy} performs on-policy distillation by sampling trajectories from the current student policy and using a frozen teacher to provide token-level supervision on the student-visited states. We implement OPD following Eq.~\eqref{eq:opd} without adaptive truncation, using the response-length limits specified in Table~\ref{tab:pairs}.
\item \textbf{TOPD}~\citep{zhang2026full} accelerates OPD by truncating student rollouts to a fixed horizon, thereby reducing the number of generated and supervised tokens. We use a fixed rollout limit of $4096$ tokens and keep its evaluation budget identical to that of OPD.
\item \textbf{Prune-OPD}~\citep{yang2026prune} dynamically adjusts a batch-shared rollout horizon based on feedback from previous training steps, reducing unnecessary computation while allowing the horizon to evolve during training. We use threshold $0.7$, $w_{\mathrm{drop}}=0.01$, $w_{\mathrm{base}}=0.5$, initial and minimum rollout lengths of $1024$ tokens, and a hit ratio of $0.1$.
\end{itemize}

\subsection{Protocol and Hyperparameters}\label{appendix_protocol}

\input{tables/parameter}

\textbf{Training and evaluation.} Table~\ref{tab:Hyperparameters} lists the optimization, controller and decoding settings. Each training batch contains $64$ prompts with $4$ responses each, giving $256$ trajectories. Accuracy is the mean success rate over sampled responses, macro-averaged across benchmarks.

\textbf{Hardware and timing.} Each run uses one node with $8$ NVIDIA H20 GPUs (approximately $96$\,GB each), two AMD EPYC 9K84 96-core CPUs and approximately $2.2$\,TiB of host memory. Reported training time sums training-step durations and excludes evaluation.

%% file: tables/dataset.tex
\begin{table}[htbp!]
\centering
\small
\setlength{\tabcolsep}{4pt}
\renewcommand{\arraystretch}{1.15}
\caption{The training corpus and the five in-domain benchmarks are competition mathematics; the four out-of-domain benchmarks are not, and GPQA is not mathematics at all.}
\label{tab.datasets}
\resizebox{1.0\linewidth}{!}{
\begin{tabular}{llcrc}
\toprule
\textbf{Dataset} & \textbf{Domain} & \textbf{Split used} & \textbf{\#Problems} & \textbf{Evaluation budget} \\
\midrule
\rowcolor{gray!10} \multicolumn{5}{l}{\textit{Training corpus}} \\
DAPO-Math-17K~\citep{yu2025dapo} & Competition mathematics & train (all) & 17{,}917 & --- \\
\midrule
\rowcolor{gray!10} \multicolumn{5}{l}{\textit{In-domain evaluation}} \\
AMC23 & Competition mathematics & test & 83 & $31744$ tok., mean@16 \\
AIME24 & Competition mathematics & test & 30 & $31744$ tok., mean@16 \\
AIME25 & Competition mathematics & test & 30 & $31744$ tok., mean@16 \\
HMMT24 & Competition mathematics & test & 30 & $31744$ tok., mean@16 \\
HMMT25 & Competition mathematics & test & 30 & $31744$ tok., mean@16 \\
\cmidrule(lr){1-5}
\textit{total} & & & \textit{203} & \\
\midrule
\rowcolor{gray!10} \multicolumn{5}{l}{\textit{Out-of-domain evaluation}} \\
MATH-500 & Mathematics, mixed level & test & 250 & $31744$ tok., mean@4 \\
Minerva & Undergraduate STEM & test & 250 & $31744$ tok., mean@4 \\
Olympiad-Bench & Olympiad mathematics & test & 250 & $31744$ tok., mean@4 \\
GPQA & Graduate-level science & test & 198 & $31744$ tok., mean@4 \\
\cmidrule(lr){1-5}
\textit{total} & & & \textit{948} & \\
\bottomrule
\end{tabular}
}
\end{table}

%% file: tables/pairs.tex
\begin{table}[htbp!]
\centering
\small
\setlength{\tabcolsep}{4pt}
\renewcommand{\arraystretch}{1.5}
\caption{The four teacher--student pairs and the controller each one runs.}
\label{tab:pairs}
\resizebox{1.0\linewidth}{!}{
\begin{tabular}{llcrrrrr}
\toprule
\textbf{Student} & \textbf{Teacher} & \textbf{Relationship} & \textbf{Max resp.} & $k_0$ & $W$ & $k_{\min}$ & $L_{\max}$ \\
\midrule
R1-Distill-Qwen-1.5B & JustRL-DeepSeek-1.5B & Same-size RL-improved copy of the student & 12288 & 1024 & 1024 & 512 & 4400 \\
R1-Distill-Qwen-1.5B & R1-Distill-Qwen-7B & Larger same-family reasoning model & 12288 & 128 & 128 & 16 & 12288 \\
Qwen3-1.7B-Base & Qwen3-4B (Non-thinking) & Stronger same-family teacher, base student & 8192 & 128 & 128 & 16 & 8192 \\
Qwen3-4B-Base & Qwen3-4B (Non-thinking) & Same-size post-trained teacher, base student & 12288 & 128 & 128 & 16 & 12288 \\
\bottomrule
\end{tabular}
}
\end{table}

%% file: tables/parameter.tex
\begin{table}[htbp!]
\centering
\small
\setlength{\tabcolsep}{4pt}
\renewcommand{\arraystretch}{0.9}
\caption{Parameter setting of our method \method.}
\vspace{-2mm}
\label{tab:Hyperparameters}
\resizebox{1.0\linewidth}{!}{
\begin{tabular}{lc}
\toprule
\textbf{Parameter} & \textbf{Value}\\
\midrule
\rowcolor{gray!10} \multicolumn{2}{l}{\textit{Optimisation}} \\
Train batch size (prompts per step) & $64$ \\
PPO mini-batch size & $64$ (one update per step) \\
Samples per prompt $n$ & $4$ \\
Learning rate & $1\times10^{-6}$, constant  \\
Micro-batch per GPU & $1$, dynamic batching enabled  \\
Total steps / epochs & $203$ / $1$  \\
Max prompt length & $1024$ tokens  \\
Max response length (rollout) & $8192$ on G3, $12288$ on G1, G2, G4  \\
Parameter dtype & fp32  \\
Loss aggregation & token-mean  \\
KL penalty & disabled \\
Prompt shuffling & enabled \\
\midrule
\rowcolor{gray!10} \multicolumn{2}{l}{\textit{OPD objective}, Eq.~\eqref{eq:opd}} \\
Advantage estimator & \texttt{token\_reward\_direct} \\
Candidate set size $k$ & $16$ \\
Candidate set source & student top-$k$ (\texttt{only\_stu}) \\
Reward weight mode & \texttt{student\_p} \\
Student temperature & $1.0$ \\
Teacher temperature & $1.0$ \\
Repetition penalty & $1.0$ \\
Format reward & disabled \\
\midrule
\rowcolor{gray!10} \multicolumn{2}{l}{\textit{ICT controller}, Eqs.~\eqref{eq:phi}--\eqref{eq:usable} and~\eqref{eq:step}} \\
Drift threshold $\eta$ & $0.70$ \\
Allowance $B$ & $200$ \\
Initial checkpoint $k_0$ & $128$; $1024$ on G1 \\
Trailing window $W$ & $128$; $1024$ on G1 \\
Minimum advance $k_{\min}$ & $16$; $512$ on G1 \\
Step discount $\rho$ & $1.00$ on all four pairs \\
Context limit $L_{\max}$ & max response length; $4400$ on G1 \\
Verification micro-batch & $4096$ tokens \\
Sampler resident across rounds & yes, at \texttt{gpu\_memory\_utilization}$=0.6$ \\
\midrule
\rowcolor{gray!10} \multicolumn{2}{l}{\textit{Evaluation}} \\
Decoding budget & $31744$ tokens \\
Samples per prompt & $16$ in-domain, $4$ out-of-domain \\
Sampling & \texttt{do\_sample}, $T=1.0$, top-$p=0.95$ \\
Cadence & every $20$ steps; once at step $203$ for ablations \\
Reported metric & mean@$n$ accuracy, macro-averaged over benchmarks \\
\bottomrule
\end{tabular}
}
\end{table}

%% file: appendix/4_moreresult.tex
\section{Additional Results}\label{appendix_moreresult}

This section provides the numerical results and trajectory statistics supporting the efficiency, mechanism, transferability and robustness studies in Section~\ref{sec:experiments}.

\subsection{Training-Time Breakdown}\label{appendix_efficiency}

\input{tables/efficiency}

Table~\ref{tab:efficiency} gives the values underlying Figure~\ref{fig.steptime}. Shorter rollouts reduce generation and policy-update time on all four pairs. Response-token reductions exceed step-time reductions because prompt processing, repeated verification and other per-step costs remain. The step-time ratios range from $2.20\times$ to $7.52\times$, compared with response-token reductions of $2.3\times$ to $21.9\times$.

\subsection{Trajectory Lengths and Stopping Statistics}\label{appendix_mechanism}

\input{tables/mechanism}

Table~\ref{tab:mechanism} summarizes retained lengths, stopping reasons and verification costs over $203$ training steps. The mean retained length ranges from $304$ to $2708$ tokens across pairs, with substantial variation within each batch. Figure~\ref{fig.boundary} shows how these lengths evolve during training.

The allowance determines most stopping decisions on G2--G4. On G1, an average batch stops $130$ trajectories at the allowance, $96$ at the length limit and $30$ at EOS. Across all four pairs, verification processes $1.54\times$--$2.41\times$ as many positions as are retained, and decoding overshoot is $4.5\%$--$10.9\%$ of the retained length.

\begin{figure}[htbp!]
    \centering
    \includegraphics[width=1.0\textwidth]{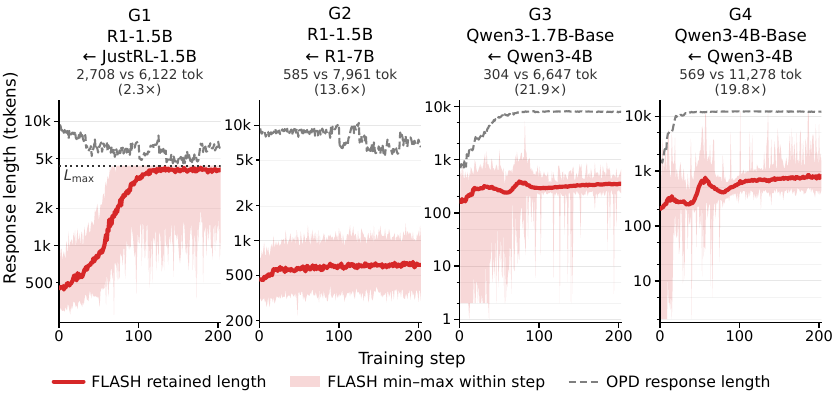}
    \caption{Retained lengths over training.}
    \label{fig.boundary}
\end{figure}

\subsection{Out-of-Domain Transfer}\label{appendix_ood}

\input{tables/ood}

Table~\ref{tab:ood} gives the results for Figure~\ref{fig.transfer}. Macro differences on G1--G3 are within $1.4$ points. G4 improves on all four benchmarks, with gains of $5.70$, $5.30$, $4.10$ and $8.25$ points on MATH-500, Minerva, Olympiad-Bench and GPQA, respectively. G1 has lower scores on Olympiad-Bench and GPQA, by $2.90$ and $2.54$ points.

\subsection{Cross-Generation Transfer}\label{appendix_crossfamily}

\input{tables/crossfamily}

Table~\ref{tab:crossfamily} report cross-generation results. On R1-1.5B$\leftarrow$Qwen3-4B, we reaches $28.45$ average accuracy compared with $28.34$ for OPD, while reducing training time from $12.02$ to $1.38$ hours. 

%% file: tables/efficiency.tex
\begin{table*}[htbp!]
    \centering
    \small
    \renewcommand\arraystretch{1.1}
    \setlength{\tabcolsep}{8pt}
    \caption{Where the speedup comes from: mean wall-clock seconds per training step.}
    \label{tab:efficiency}
    \resizebox{1.0\linewidth}{!}{
    \begin{tabular}{lcccccccc}
       \toprule
       \multirow{2}{*}{\textbf{Method}} & \multicolumn{5}{c|}{\textbf{Step-time attribution (s/step)}} & \multicolumn{3}{c}{\textbf{Tokens vs.\ time}} \\
       \cmidrule(lr){2-6} \cmidrule(lr){7-9}
       & Rollout & Teacher & Log-prob & Update & \textbf{Step} & Length & Token $\downarrow$ & Step $\downarrow$ \\
       \midrule
       \rowcolor{gray!10} \multicolumn{9}{c}{\textit{R1-1.5B $\leftarrow$ JustRL-1.5B}} \\
       OPD & 70.3 & 11.2 & 10.5 & 31.5 & 132.3 & 6122 & -- & -- \\
       \rowcolor{cyan!8} {\method} & 26.2 & 14.2 & 0.0 & 14.7 & {60.0} & 2708 & 2.3$\times$ & {2.20$\times$} \\
       \midrule
       \rowcolor{gray!10} \multicolumn{9}{c}{\textit{R1-1.5B $\leftarrow$ R1-7B}} \\
       OPD & 88.8 & 40.4 & 13.3 & 42.4 & 194.3 & 7961 & -- & -- \\
       \rowcolor{cyan!8} {\method} & 14.4 & 12.2 & 0.0 & 4.3 & {36.5} & 585 & 13.6$\times$ & {5.32$\times$} \\
       \midrule
       \rowcolor{gray!10} \multicolumn{9}{c}{\textit{Qwen3-1.7B-Base $\leftarrow$ Qwen3-4B}} \\
       OPD & 109.7 & 25.8 & 12.6 & 41.1 & 196.1 & 6647 & -- & -- \\
       \rowcolor{cyan!8} {\method} & 11.1 & 10.1 & 0.0 & 2.7 & {27.5} & 304 & 21.9$\times$ & {7.13$\times$} \\
       \midrule
       \rowcolor{gray!10} \multicolumn{9}{c}{\textit{Qwen3-4B-Base $\leftarrow$ Qwen3-4B}} \\
       OPD & 306.8 & 47.8 & 45.6 & 175.7 & 586.9 & 11278 & -- & -- \\
       \rowcolor{cyan!8} {\method} & 44.2 & 19.7 & 0.0 & 8.5 & {78.1} & 569 & 19.8$\times$ & {7.52$\times$} \\
       \bottomrule
    \end{tabular}
    }
\end{table*}

%% file: tables/mechanism.tex
\begin{table*}[htbp!]
    \centering
    \small
    \renewcommand\arraystretch{1.3}
    \setlength{\tabcolsep}{4pt}
    \caption{Mechanism counters, averaged over all training steps.}
    \label{tab:mechanism}
    \resizebox{1.0\linewidth}{!}{
    \begin{tabular}{lccccccccc}
       \toprule
       \multirow{2}{*}{\textbf{Pair}} & \multicolumn{3}{c}{\textbf{Retained length $\hat{E}$}} & \multicolumn{3}{c}{\textbf{Stopping rule (of 256)}} & \multicolumn{3}{c}{\textbf{Verification cost}} \\
       \cmidrule(lr){2-4} \cmidrule(lr){5-7} \cmidrule(lr){8-10}
       & min & mean & max & allowance & $L_{\max}$ & EOS & rounds & amplif. & overshoot \\
       \midrule
       R1-1.5B $\leftarrow$ JustRL-1.5B & 830 & 2708 & 3509 & 130 & 96 & 30 & 2.64 & 1.54$\times$ & 231 (8.5\%) \\
       R1-1.5B $\leftarrow$ R1-7B & 335 & 585 & 1052 & 256 & 0 & 0 & 6.23 & 2.41$\times$ & 26 (4.5\%) \\
       Qwen3-1.7B-Base $\leftarrow$ Qwen3-4B & 126 & 304 & 672 & 243 & 0 & 13 & 4.98 & 2.23$\times$ & 14 (4.5\%) \\
       Qwen3-4B-Base $\leftarrow$ Qwen3-4B & 293 & 569 & 2117 & 250 & 0 & 6 & 6.23 & 2.19$\times$ & 62 (10.9\%) \\
       \bottomrule
    \end{tabular}
    }
    \vspace{-6mm}
\end{table*}

%% file: tables/ood.tex

\newcommand{\hlcell}[1]{\cellcolor{cyan!8}#1}

\begin{table}[htbp!]
    \centering
    \small
    \renewcommand{\arraystretch}{1.1}
    \setlength{\tabcolsep}{4pt}
    \caption{Out-of-domain transfer.}
    \label{tab:ood}

    \resizebox{1.0\linewidth}{!}{
    \begin{tabular}{llcccccc}
        \toprule
        \textbf{Pair}
        & \textbf{Method}
        & \textbf{MATH-500}
        & \textbf{Minerva}
        & \textbf{Olympiad}
        & \textbf{GPQA}
        & \textbf{Avg.}
        & \textbf{$\Delta$} \\
        \midrule

        \multirow{2}{*}{R1-1.5B $\leftarrow$ JustRL-1.5B}
        & OPD
        & 90.40
        & 38.40
        & 61.60
        & 36.93
        & 56.83
        & \\

        & \hlcell{\method}
        & \hlcell{90.60}
        & \hlcell{38.10}
        & \hlcell{58.70}
        & \hlcell{34.39}
        & \hlcell{55.45}
        & \hlcell{tie} \\
        \midrule

        \multirow{2}{*}{R1-1.5B $\leftarrow$ R1-7B}
        & OPD
        & 85.20
        & 34.50
        & 51.40
        & 11.80
        & 45.73
        & \\

        & \hlcell{\method}
        & \hlcell{85.20}
        & \hlcell{34.30}
        & \hlcell{50.00}
        & \hlcell{15.23}
        & \hlcell{46.18}
        & \hlcell{tie} \\
        \midrule

        \multirow{2}{*}{Qwen3-1.7B-Base $\leftarrow$ Qwen3-4B}
        & OPD
        & 57.60
        & 22.80
        & 26.80
        & 15.36
        & 30.64
        & \\

        & \hlcell{\method}
        & \hlcell{59.80}
        & \hlcell{23.30}
        & \hlcell{29.30}
        & \hlcell{13.96}
        & \hlcell{31.59}
        & \hlcell{tie} \\
        \midrule

        \multirow{2}{*}{Qwen3-4B-Base $\leftarrow$ Qwen3-4B}
        & OPD
        & 73.50
        & 32.90
        & 43.60
        & 16.24
        & 41.56
        & \\

        & \hlcell{\method}
        & \hlcell{79.20}
        & \hlcell{38.20}
        & \hlcell{47.70}
        & \hlcell{24.49}
        & \hlcell{47.40}
        & \hlcell{$+5.84$} \\
        \bottomrule
    \end{tabular}
    }
\end{table}

%% file: tables/crossfamily.tex


\begin{table}[htbp!]
\centering
\small
\setlength{\tabcolsep}{5pt}
\renewcommand{\arraystretch}{1.2}
\caption{Cross-generation transfer.}
\label{tab:crossfamily}

\resizebox{1.0\linewidth}{!}{
\begin{tabular}{llrrrrrcrr}
\toprule
\textbf{Pair}
& \textbf{Method}
& AMC23
& AIME24
& AIME25
& HMMT24
& HMMT25
& \textbf{Avg.}
& \textbf{Time}
& \textbf{Speed Up} \\
\midrule

\multirow{3}{*}{R1-1.5B $\leftarrow$ Qwen3-4B}
& Student
& 63.35
& 26.80
& 20.00
& 12.40
& 12.30
& 26.97
& --
& -- \\

& OPD
& 60.02
& 31.25
& 22.92
& 12.92
& 14.58
& 28.34
& 12.02h
& 1.00$\times$ \\

& \hlcell{\method{}}
& \hlcell{64.76}
& \hlcell{27.92}
& \hlcell{22.08}
& \hlcell{13.96}
& \hlcell{13.54}
& \hlcell{28.45}
& \hlcell{1.38h}
& \hlcell{8.71$\times$} \\

\bottomrule
\end{tabular}
}
\end{table}

%% file: appendix/5_morediscussion.tex
\section{Limitations}\label{more_discuss}\label{sec:disc-limitations}

While our results demonstrate that \method{} can substantially accelerate on-policy distillation while preserving competitive performance, several limitations warrant further investigation.

\ding{182} \textbf{Compatibility criterion.}
Our current method uses top-$k$ overlap as a practical proxy for measuring local supervision compatibility between the student and teacher. Accordingly, the stopping guarantee characterizes the boundary induced by this criterion rather than directly guaranteeing downstream task accuracy. We adopt this criterion because it can be evaluated online at intermediate prefixes without requiring completed responses, making it particularly suitable for adaptive rollout control. Nevertheless, supervision compatibility may admit richer definitions beyond top-$k$ agreement. Exploring alternative signals that better capture when teacher supervision remains beneficial, while retaining efficient online evaluation, is an important direction for future work.
\ding{184} \textbf{Beyond token-level supervision.}
The current formulation is designed for token-level OPD, where supervision compatibility can be evaluated before a trajectory is completed. Extending the same adaptive stopping principle to sequence-level supervision is less straightforward, since outcome rewards or sequence-level judgments generally depend on the completed response. Consequently, directly applying our stopping rule could remove information required to evaluate the final outcome. Developing analogous notions of intermediate reliability for sequence-level objectives may therefore enable adaptive computation beyond OPD and represents an interesting direction.

%% file: ref.bib
@String(JMLR = {JMLR})

@article{li2026rethinking,
  title={Rethinking on-policy distillation of large language models: Phenomenology, mechanism, and recipe},
  author={Li, Yaxuan and Zuo, Yuxin and He, Bingxiang and Zhang, Jinqian and Xiao, Chaojun and Qian, Cheng and Yu, Tianyu and Gao, Huan-ang and Yang, Wenkai and Liu, Zhiyuan and others},
  journal={arXiv preprint arXiv:2604.13016},
  year={2026}
}

@article{fu2026revisiting,
  title={Revisiting on-policy distillation: Empirical failure modes and simple fixes},
  author={Fu, Yuqian and Huang, Haohuan and Jiang, Kaiwen and Liu, Jiacai and Jiang, Zhuo and Zhu, Yuanheng and Zhao, Dongbin},
  journal={arXiv preprint arXiv:2603.25562},
  year={2026}
}

@inproceedings{gu2024minillm,
  title={Minillm: Knowledge distillation of large language models},
  author={Gu, Yuxian and Dong, Li and Wei, Furu and Huang, Minlie},
  booktitle={International Conference on Learning Representations},
  volume={2024},
  pages={32694--32717},
  year={2024}
}

@article{hinton2015distilling,
  title={Distilling the knowledge in a neural network},
  author={Hinton, Geoffrey and Vinyals, Oriol and Dean, Jeff},
  journal={arXiv preprint arXiv:1503.02531},
  year={2015}
}

@article{hubotter2026reinforcement,
  title={Reinforcement learning via self-distillation},
  author={H{\"u}botter, Jonas and L{\"u}beck, Frederike and Behric, Lejs and Baumann, Anton and Bagatella, Marco and Marta, Daniel and Hakimi, Ido and Shenfeld, Idan and Buening, Thomas Kleine and Guestrin, Carlos and others},
  journal={arXiv preprint arXiv:2601.20802},
  year={2026}
}

@inproceedings{jang2026stable,
  title={Stable on-policy distillation through adaptive target reformulation},
  author={Jang, Ijun and Yeom, Jewon and Yeo, Juan and Lim, Hyunggyu and Kim, Taesup},
  booktitle={Findings of the Association for Computational Linguistics: ACL 2026},
  pages={42217--42227},
  year={2026}
}

@inproceedings{jiao2020tinybert,
  title={TinyBERT: Distilling BERT for Natural Language Understanding},
  author={Jiao, Xiaoqi and Yin, Yichun and Shang, Lifeng and Jiang, Xin and Chen, Xiao and Li, Linlin and Wang, Fang and Liu, Qun},
  booktitle={Findings of the Association for Computational Linguistics: EMNLP 2020},
  pages={4163--4174},
  year={2020}
}

@misc{jin2026entropy,
  title={Entropy-Aware On-Policy Distillation of Language Models},
  author={Jin, Woogyeol and Min, Taywon and Yang, Yongjin and Kadhe, Swanand Ravindra and Zhou, Yi and Wei, Dennis and Baracaldo, Nathalie and Lee, Kimin},
  year={2026},
  eprint={2603.07079},
  archivePrefix={arXiv}
}

@article{kim2026does,
  title={Why Does Self-Distillation (Sometimes) Degrade the Reasoning Capability of LLMs?},
  author={Kim, Jeonghye and Luo, Xufang and Kim, Minbeom and Lee, Sangmook and Kim, Dohyung and Jeon, Jiwon and Li, Dongsheng and Yang, Yuqing},
  journal={arXiv preprint arXiv:2603.24472},
  year={2026}
}

@inproceedings{kim2016sequence,
  title={Sequence-level knowledge distillation},
  author={Kim, Yoon and Rush, Alexander M},
  booktitle={Proceedings of the 2016 conference on empirical methods in natural language processing},
  pages={1317--1327},
  year={2016}
}

@article{ko2026scaling,
  title={Scaling reasoning efficiently via relaxed on-policy distillation},
  author={Ko, Jongwoo and Abdali, Sara and Kim, Young Jin and Chen, Tianyi and Cameron, Pashmina},
  journal={arXiv preprint arXiv:2603.11137},
  year={2026}
}

@inproceedings{li2025small,
  title={Small models struggle to learn from strong reasoners},
  author={Li, Yuetai and Yue, Xiang and Xu, Zhangchen and Jiang, Fengqing and Niu, Luyao and Lin, Bill Yuchen and Ramasubramanian, Bhaskar and Poovendran, Radha},
  booktitle={Findings of the Association for Computational Linguistics: ACL 2025},
  pages={25366--25394},
  year={2025}
}

@article{sang2026crispcompressedreasoningiterative,
  title={Crisp: Compressed reasoning via iterative self-policy distillation},
  author={Sang, Hejian and Xu, Yuanda and Zhou, Zhengze and He, Ran and Wang, Zhipeng and Sun, Jiachen},
  journal={arXiv preprint arXiv:2603.05433},
  year={2026}
}

@article{sanh2019distilbert,
  title={DistilBERT, a distilled version of BERT: smaller, faster, cheaper and lighter},
  author={Sanh, Victor and Debut, Lysandre and Chaumond, Julien and Wolf, Thomas},
  journal={arXiv preprint arXiv:1910.01108},
  year={2019}
}

@article{wang2020minilm,
  title={Minilm: Deep self-attention distillation for task-agnostic compression of pre-trained transformers},
  author={Wang, Wenhui and Wei, Furu and Dong, Li and Bao, Hangbo and Yang, Nan and Zhou, Ming},
  journal={Advances in neural information processing systems},
  volume={33},
  pages={5776--5788},
  year={2020}
}

@misc{yang2025qwen3,
  title={Qwen3 Technical Report},
  author={Yang, An and Li, Anfeng and Yang, Baosong and Zhang, Beichen and Hui, Binyuan and Zheng, Bo and Yu, Bowen and Gao, Chang and Huang, Chengen and Lv, Chenxu and others},
  year={2025},
  eprint={2505.09388},
  archivePrefix={arXiv}
}

@article{yang2026learning,
  title={Learning beyond teacher: Generalized on-policy distillation with reward extrapolation},
  author={Yang, Wenkai and Liu, Weijie and Xie, Ruobing and Yang, Kai and Yang, Saiyong and Lin, Yankai},
  journal={arXiv preprint arXiv:2602.12125},
  year={2026}
}

@article{ye2026online,
  title={Online experiential learning for language models},
  author={Ye, Tianzhu and Dong, Li and Dong, Qingxiu and Wu, Xun and Huang, Shaohan and Wei, Furu},
  journal={arXiv preprint arXiv:2603.16856},
  year={2026}
}

@article{yu2025dapo,
  title={Dapo: An open-source llm reinforcement learning system at scale},
  author={Yu, Qiying and Zhang, Zheng and Zhu, Ruofei and Yuan, Yufeng and Zuo, Xiaochen and Yue, Yu and Dai, Weinan and Fan, Tiantian and Liu, Gaohong and Liu, Lingjun and others},
  journal={Advances in Neural Information Processing Systems},
  volume={38},
  pages={113222--113244},
  year={2026}
}

@article{accordion,
  title={Accordion-thinking: Self-regulated step summaries for efficient and readable llm reasoning},
  author={Yang, Zhicheng and Guo, Zhijiang and Huang, Yinya and Wang, Yongxin and Shi, Wenlei and Wang, Yiwei and Liang, Xiaodan and Tang, Jing},
  journal={arXiv preprint arXiv:2602.03249},
  year={2026}
}

@article{zhang2026full,
  title={Are Full Rollouts Necessary for On-Policy Distillation?},
  author={Zhang, Yaocheng and Chai, Jiajun and Fu, Yuqian and Tu, Songjun and Wang, Xiaohan and Lin, Wei and Yin, Guojun and Zhang, Qichao and Zhu, Yuanheng and Zhao, Dongbin},
  journal={arXiv preprint arXiv:2605.31490},
  year={2026}
}

@inproceedings{ross2011reduction,
  title={A reduction of imitation learning and structured prediction to no-regret online learning},
  author={Ross, St{\'e}phane and Gordon, Geoffrey and Bagnell, Drew},
  booktitle={Proceedings of the fourteenth international conference on artificial intelligence and statistics},
  pages={627--635},
  year={2011},
  organization={JMLR Workshop and Conference Proceedings}
}

@inproceedings{agarwal2024policy,
  title={On-policy distillation of language models: Learning from self-generated mistakes},
  author={Agarwal, Rishabh and Vieillard, Nino and Zhou, Yongchao and Stanczyk, Piotr and Ramos Garea, Sabela and Geist, Matthieu and Bachem, Olivier},
  booktitle={International Conference on Learning Representations},
  volume={2024},
  pages={21246--21263},
  year={2024}
}

@misc{lu2025onpolicy,
  title  = {On-Policy Distillation},
  author = {Lu, Kevin and {Thinking Machines Lab}},
  year   = {2025},
  note   = {Thinking Machines Lab: Connectionism},
  doi    = {10.64434/tml.20251026},
  url    = {https://thinkingmachines.ai/blog/on-policy-distillation}
}

@article{zeng2026glm5,
  title={Glm-5: from vibe coding to agentic engineering},
  author={Zeng, Aohan and Lv, Xin and Hou, Zhenyu and Du, Zhengxiao and Zheng, Qinkai and Chen, Bin and Yin, Da and Ge, Chendi and Huang, Chenghua and Xie, Chengxing and others},
  journal={arXiv preprint arXiv:2602.15763},
  year={2026}
}

@article{shenfeld2026selfdistillation,
  title   = {Self-Distillation Enables Continual Learning},
  author  = {Shenfeld, Idan and Damani, Mehul and H{\"u}botter, Jonas and Agrawal, Pulkit},
  journal = {arXiv preprint arXiv:2601.19897},
  year    = {2026}
}

@article{zhao2026selfdistilled,
  title={Self-distilled reasoner: On-policy self-distillation for large language models},
  author={Zhao, Siyan and Xie, Zhihui and Liu, Mengchen and Huang, Jing and Pang, Guan and Chen, Feiyu and Grover, Aditya},
  journal={arXiv preprint arXiv:2601.18734},
  year={2026}
}

@article{wu2026lightningopd,
  title={Lightning opd: Efficient post-training for large reasoning models with offline on-policy distillation},
  author={Wu, Yecheng and Han, Song and Cai, Hai},
  journal={arXiv preprint arXiv:2604.13010},
  year={2026}
}

@article{cai2026learning,
  title={Learning to foresee: Unveiling the unlocking efficiency of on-policy distillation},
  author={Cai, Yuchen and Cao, Ding and Lin, Liang and Luo, Chunxi and Xu, Xin and Yang, Kai and Liu, Weijie and Yang, Saiyong and Zhao, Tianxiang and Sun, Guangzhong and others},
  journal={arXiv preprint arXiv:2605.11739},
  year={2026}
}

@inproceedings{zhang2026fast,
  title={Fast and effective on-policy distillation from reasoning prefixes},
  author={Zhang, Dongxu and Yang, Zhichao and Janghorbani, Sepehr and Han, Jun and Ressler II, Andrew and Qian, Qian and Lyng, Gregory D and Batra, Sanjit Singh and Tillman, Robert E},
  booktitle={Findings of the Association for Computational Linguistics: ACL 2026},
  pages={25553--25569},
  year={2026}
}

@article{yang2026prune,
  title   = {Prune-OPD: Efficient and Reliable On-Policy Distillation for Long-Horizon Reasoning},
  author  = {Yang, Zhicheng and Guo, Zhijiang and Song, Yifan and Xu, Minrui and Wang, Yongxin and Wang, Yiwei and Liang, Xiaodan and Tang, Jing},
  journal = {arXiv preprint arXiv:2605.07804},
  year    = {2026}
}

@article{liang2026adwin,
  title={ADWIN: Adaptive Windows for Horizon-Aware On-Policy Distillation},
  author={Liang, Kun and Tang, Chenming and Bai, Clive and Liu, Weijie and Yang, Saiyong and Wu, Yunfang},
  journal={arXiv preprint arXiv:2605.28396},
  year={2026}
}

@article{zhao2026prefix,
  title={Prefix-Guided On-Policy Distillation: Mining Golden Trajectories from Rollouts},
  author={Zhao, Qingfei and Song, Huan and Tian, Shuyu and Shao, Jiawei and Li, Xuelong},
  journal={arXiv preprint arXiv:2606.21994},
  year={2026}
}

@article{xu2026deepseek,
  title={DeepSeek-V4. 1-Flash: Pushing the Limits of KV Cache Compression},
  author={Xu, Anyi and Li, B and Lin, Bangcai and Xue, Bing and Xian, BingCheng and Xu, Bingzheng and Wu, Bochao and Zhang, Bowei and Deng, Boyi and Yu, CC and others},
  journal={arXiv preprint arXiv:2609.19969},
  year={2026}
}

@article{wang2026demystifying,
  title={Demystifying On-Policy Distillation: Roles, Pathologies, and Regulations},
  author={Wang, Rui and Wang, Hongru and Chen, Yi and Xue, Boyang and Fang, Tianqing and Yu, Wenhao and Wong, Kam-Fai},
  journal={arXiv preprint arXiv:2607.13399},
  year={2026}
}
